\documentclass[10pt,journal,compsoc]{IEEEtran}

\usepackage{times}
\usepackage{epsfig}
\usepackage{graphicx}
\usepackage{amsmath}
\usepackage{amssymb}

\usepackage{booktabs}
\usepackage{amsthm}
\theoremstyle{plain}
\usepackage[compress]{cite}
\usepackage{graphics}
\usepackage{epstopdf}
\usepackage{epsfig}
\usepackage{lineno,hyperref}
\usepackage{array}
\usepackage{amsmath,amssymb}
\usepackage{mathrsfs}
\usepackage{subfigure} 
\usepackage{multirow}
\usepackage{xcolor}
\usepackage{bm}
\usepackage{eqparbox}
\usepackage{algorithm}
\usepackage{algorithmic}
\usepackage{amsfonts}
\usepackage{enumerate}
\usepackage{indentfirst}
\usepackage{url}
\newtheorem{myTheorem}{Theorem}

\newtheorem{myLemma}{Lemma}
\newtheorem{myAssumption}{Assumption}

\usepackage{bbm}
\allowdisplaybreaks[4]


\usepackage{ragged2e}
\begin{document}

\title{Adaptive Graph Auto-Encoder for General Data Clustering}

\author{Xuelong Li, \IEEEmembership{~Fellow,~IEEE}, Hongyuan Zhang, and Rui Zhang,\IEEEmembership{~Member,~IEEE}



\thanks{The authors are with the School of Artificial Intelligence, Optics and Electronics (iOPEN), Northwestern Polytechnical University, Xi'an 710072, P.R. China. They are also with the Key Laboratory of Intelligent Interaction and Applications (Northwestern Polytechnical University), Ministry of Industry and Information Technology, Xi'an 710072, P. R. China.}




\thanks{This work is supported by The National Natural Science Foundation of China (No. 61871470). }

\thanks{Corresponding author: Xuelong Li.}

\thanks{The codes can be downloaded from \url{https://github.com/hyzhang98/AdaGAE}.}

\thanks{E-mail: li@nwpu.edu.cn; hyzhang98@gmail.com.}

\thanks{
    \copyright 2021 IEEE.  Personal use of this material is permitted.  Permission from IEEE must be obtained for all other uses, in any current or future media, including reprinting/republishing this material for advertising or promotional purposes, creating new collective works, for resale or redistribution to servers or lists, or reuse of any copyrighted component of this work in other works.
}

}

\IEEEtitleabstractindextext{
\justifying  
\begin{abstract}
    Graph-based clustering plays an important role in the clustering area. 
    Recent studies about graph neural networks (\textit{GNN}) have achieved impressive success on graph-type data. 
    However, in general clustering tasks, the graph structure of data does not exist 
    such that GNN can not be applied to clustering directly and the strategy to construct a graph is crucial for performance. 
    Therefore, how to extend GNN into general clustering tasks is an attractive problem.  
    In this paper, we propose a graph auto-encoder for general data clustering, \textit{AdaGAE}, which constructs the graph adaptively according to the generative perspective of graphs. 
    The adaptive process is designed to induce the model to exploit the high-level information behind data and utilize the non-Euclidean structure sufficiently.
    Importantly, we find that the simple update of the graph will result in severe degeneration, 
    which can be concluded as \textit{better reconstruction means worse update}.
    We provide rigorous analysis theoretically and empirically. Then we further design a novel mechanism to avoid the collapse. 
    Via extending the generative graph models to general type data, a graph auto-encoder with a novel decoder is devised and the weighted graphs can be also applied to GNN. 
    AdaGAE performs well and stably in different scale and type datasets. 
    Besides, it is insensitive to the initialization of parameters and requires no pretraining.
\end{abstract}

\begin{IEEEkeywords}
	General data clustering, graph auto-encoder, scalable methods.
\end{IEEEkeywords}

}

\maketitle

\section{Introduction}
Clustering, which intends to group data points without supervision information, is one of the most fundamental tasks in machine learning. 
As well as the well-known $k$-means \cite{clustering_base, FCM, MJP}, graph-based clustering \cite{SC, RatioCut, CAN} is also a representative kind of clustering method. 
Graph-based clustering methods can capture manifold information so that they are available for the non-Euclidean type data, which is not provided by $k$-means. Therefore, they are widely used in practice. Due to the success of deep learning, how to combine neural networks and traditional clustering models has been studied a lot \cite{SpectralNet, DEC, DFKM}. In particular, CNN-based clustering models have been extensively investigated \cite{JULE,DEPICT,DualDeepClustering}. However, the convolution operation may be unavailable on other kinds of datasets, \textit{e.g.}, text, social network, signal, data mining, \textit{etc}.

Network embedding is a fundamental task for graph type data such as recommendation systems, social networks, \textit{etc}. 
The goal is to map nodes of a given graph into latent features (namely embedding) such that the learned embedding can be utilized on node classification, node clustering, and link prediction. 
Roughly speaking, the network embedding approaches can be classified into 2 categories: generative models \cite{GraphGAN, DeepWalk} and discriminative models \cite{DNGR,SDNE}. The former tries to model a connectivity distribution for each node while the latter learns to distinguish whether an edge exists between two nodes directly. 
%
In recent years, graph neural networks (\textit{GNN}) \cite{GNN}, especially graph convolution neural networks (\textit{GCN}) \cite{GCN,ChebNet}, have attracted a mass of attention due to the success made in the neural networks area. GNNs extend classical neural networks into irregular data so that the deep information hidden in graphs is exploited sufficiently. In this paper, we only focus on GCNs and its variants. 
GCNs have shown superiority compared with traditional network embedding models. Similarly, graph auto-encoder (\textit{GAE}) \cite{GAE,MGAE,AttentionGAE} is developed to extend GCN into unsupervised learning.

\begin{figure*}[t]
    \centering
    \setlength{\abovecaptionskip}{2mm}
    \setlength{\belowcaptionskip}{0mm}
    \includegraphics[width=0.75\linewidth]{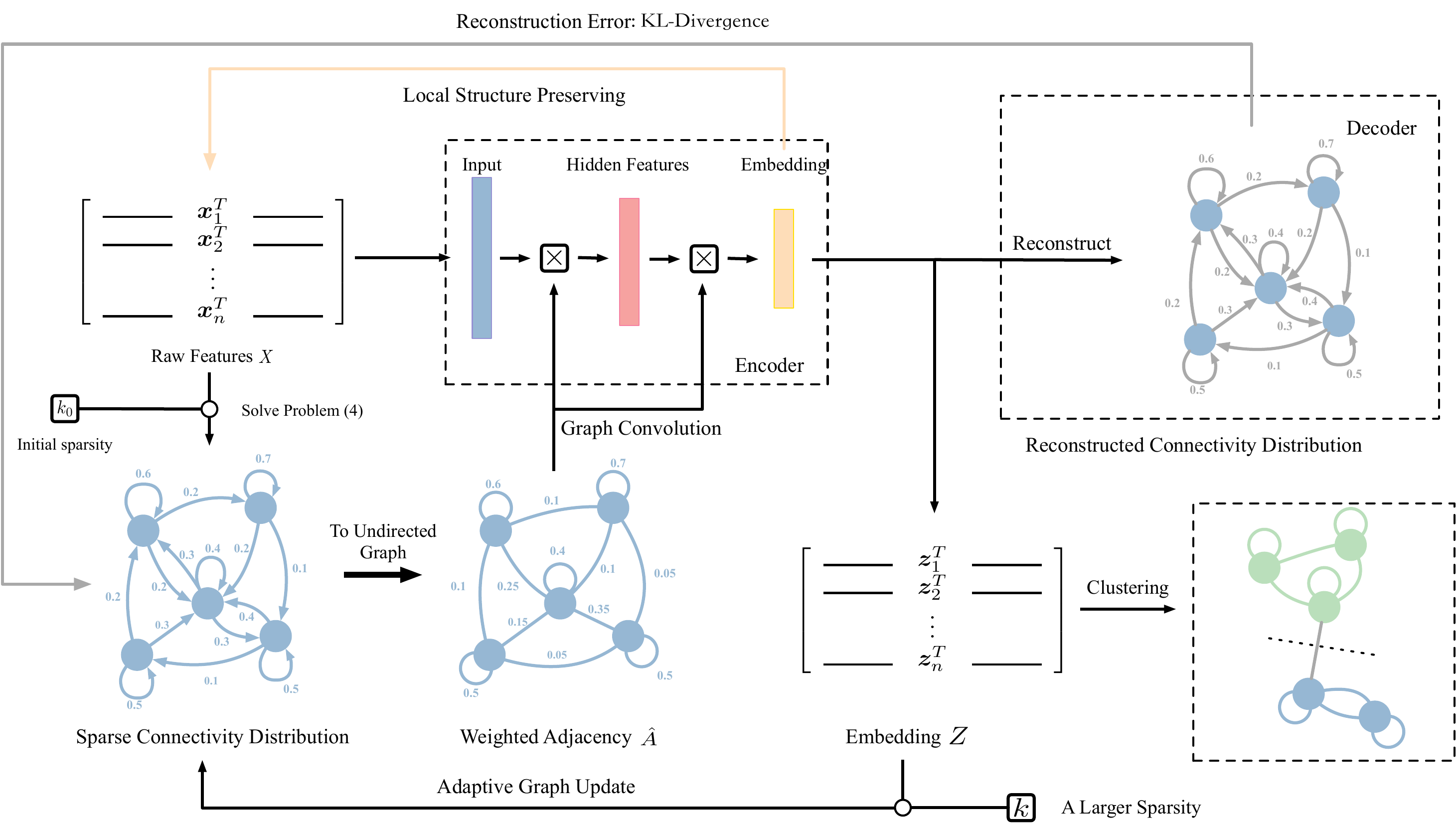}
    \caption{Framework of AdaGAE. $k_0$ is the initial sparsity. First, we construct a sparse graph via the generative model defined in Eq. (\ref{obj_CAN}). The learned graph is employed to apply the GAE designed for the weighted graphs. After training the GAE, we update the graph from the learned embedding with a larger sparsity, $k$. With the new graph, we re-train the GAE. These steps are repeated until the convergence. }
    \label{figure_framework}
\end{figure*}

However, the existing methods are limited to graph type data while no graph is provided for general data clustering. Since a large proportion of clustering methods are based on the graph, it is reasonable to consider how to employ GCN to promote the performance of graph-based clustering methods.
In this paper, we propose an Adaptive Graph Auto-Encoder (\textit{AdaGAE}) to extend graph auto-encoder into common scenarios. The main contributions are listed as follows:
\textbf{(1)} Via extending the generative graph models into general type data, GAE is naturally employed as the basic representation learning model and weighted graphs can be further applied to GAE as well. The connectivity distributions given by the generative perspective also inspires us to devise a novel architecture for decoders.
\textbf{(2)} As we utilize GAE to exploit the high-level information to construct a desirable graph, we find that the model suffers from a severe collapse due to the simple update of the graph. We analyze the degeneration theoretically and experimentally to understand the phenomenon. We further propose a simple but effective strategy to avoid it.
\textbf{(3)} AdaGAE is a scalable clustering model that works stably on different scale and type datasets, while the other deep clustering models usually fail when the training set is not large enough. Besides, it is insensitive to different initialization of parameters and needs no pretraining.

The paper is organized as: 
Section \ref{section_background} introduces 
the mathematical notations and revisits the works of deep clustering 
and graph auto-encoders.
Section \ref{subsection_perspective} introduces the generative graph models 
which is utilized to build GAE models for weighted graphs and
\ref{subsection_architecture} show the basic architecture of AdaGAE.
Section \ref{subsection_adaptive} shows a collapse phenomenon due to the naive update 
and then gives a strategy to solve it. 
All proofs are shown in Section \ref{section_proof} 
and details of experiments are provided in Section \ref{section_experiments}.

\section{Preliminary and Related Work} \label{section_background}
\vspace{-0.5mm}

\subsection{Notations}
In this paper, matrices and vectors are represented by uppercase and lowercase letters respectively. 
A graph is represented as $\mathcal G = (\mathcal V, \mathcal E, \mathcal W)$ and $|\cdot|$ is the size of some set. Vectors whose all elements equal 1 are represented as \textbf{1}.  If $\langle v_i, v_j\rangle \in \mathcal E$, then $\mathcal W_{ij} > 0$; otherwise, $\mathcal W_{ij} = 0$. For every node $v_i \in \mathcal V$, it is represented by a $d$-dimension vector $\bm x_i$ and thus, $\mathcal V$ can be also denoted by $X = [\bm x_1, \bm x_2, \cdots, \bm x_n]^T \in \mathbb R^{n \times d}$. The number of data points and clusters are represented as $n$ and $c$ respectively. 
All proofs are shown in Section \ref{section_proof}.
\vspace{-0.5mm}

\subsection{Deep Clustering}
An important topic in clustering filed is deep clustering, 
which utilizes neural networks to enhance the capacity of model. 
A fundamental model is auto-encoder (\textit{AE}) \cite{AE}, 
which has been widely used in diverse clustering methods \cite{DEC, DFKM}. 
Besides the AE-based models, SpectralNet \cite{SpectralNet} attempts to transform the core idea of spectral clustering to neural networks. 
Although SpectralNet has extended the spectral clustering into the deep version, 
it pays more attention to the simulation of spectral decomposition via neural networks 
but fails to investigate how to construct a high-quality graph via deep representations. 
In image clustering fields, CNN-based models have achieved impressive performance. 
For instance, JULE \cite{JULE} employs both CNN and RNN to obtain better representation. 
DEPICT \cite{DEPICT} and deep spectral clustering with dual networks \cite{DualDeepClustering} are based on convolution auto-encoders. 
However, they can only be applied on image datasets. 
Since we focus on clustering on both regular and irregular data, 
\textbf{these CNN-based models will not be regarded as main competitors}.

\subsection{Graph Auto-Encoder}\label{section_GAE}
In recent years, GCNs have been studied a lot to extend neural networks to graph type data. How to design a graph convolution operator is a key issue and has attracted a mass of attention. Most of them can be classified into 2 categories, spectral methods \cite{patchysan} and spatial methods\cite{spectralgcn}. 
In this paper, we focus on a simple but widely used convolution operator \cite{GCN}, which can be regarded as both the spectral operator and spatial operator. Formally, if the input of a graph convolution layer is $X \in \mathbb R^{n \times d}$ and the adjacency matrix is $A$, then the output is defined as 
\begin{equation}
    H = \varphi(\hat A X W) ,
\end{equation}
where $\varphi(\cdot)$ is certain activation function, $\hat A = \widetilde D^{-\frac{1}{2}} \widetilde A \widetilde D^{-\frac{1}{2}}$, $\widetilde A = A + I$, $\widetilde D$ denotes the degree matrix ($\widetilde D_{ii} = \sum_{j=1}^n \widetilde A_{ij}$), and $W$ denotes the parameters of GCN. It should be pointed out that $\widetilde A$ is a graph with self-loop for each node and $\hat A$ is the normalized adjacency matrix. More importantly, $\hat A X$ is equivalent to compute weighted means for each node with its first-order neighbors from the spatial aspect. To improve the performance, MixHop \cite{MixHopGCN} aims to mix information from different order neighbors and SGC \cite{SimplifyingGCN} tries to utilize higher-order neighbors. 
The capacity of GCN is also proved to some extent \cite{powerfulGCN}. 
GCN and its variants are usually used on semi-supervised learning. 

To apply graph convolution on unsupervised learning, GAE is proposed \cite{GAE}. 
GAE firstly transforms each node into latent representation (\textit{i.e.}, embedding) via GCN, and then aims to reconstruct some part of the input. GAEs proposed in \cite{GAE, AGAE, AttentionGAE} intend to reconstruct the adjacency via decoder while GAEs developed in \cite{MGAE} attempt to reconstruct the content. The difference is which extra mechanism (such as attention, adversarial learning, graph sharpness, \textit{etc.}) is used. 
EGAE \cite{EGAE} aims to design rational decoder and clustering module from theoretical aspect.   
SDCN \cite{SDCN} tries to apply GNN on general data clustering, 
but it simply constructs a graph manually and then fixes it during training.


\section{Proposed Model} \label{section_AdaGAE}
In this section, 
we elaborate the proposed model, Adaptive Graph Auto-Encoder (\textit{AdaGAE}) for general data clustering. 
The core idea of AdaGAE is illustrated in Figure \ref{figure_framework}. 
\subsection{Probabilistic Perspective of Weighted Graphs} \label{subsection_perspective}
In this paper, the underlying connectivity distribution of node $v_i$ is denoted by conditional probability $p(v | v_i)$ such that $\sum _{j=1}^n p(v_j | v_i) = 1$. From this perspective, a link can be regarded as a sampling result according to $p(v | v_i)$, which is the core assumption of the generative network embedding.

In general clustering scenarios, links between two nodes frequently do not exist. Therefore, we need to construct a weighted graph via some scheme. Since $p(v_j | v_i) \geq 0$, the probability can be regarded as valid weights. Note that $p(v_j | v_i) \neq p(v_i | v_j)$ usually holds, and therefore, the constructed graph should be viewed as a directed graph. Therefore, the construction of the weighted graph is equivalent to finding the underlying connectivity distribution.
The following assumption helps us to find an approximate connectivity distribution,
\begin{myAssumption} \label{assumption_dist}
    In an ideal situation, the representation of $v_i$ is analogous 
    to the one of $v_j$ if $p(v_i | v_j)$ is large.
\end{myAssumption}
Suppose that divergence between $v_i$ and $v_j$ is denoted by 
\begin{equation} \label{eq_divergence}
    d(v_i, v_j) = \|f(\bm x_i) - f(\bm x_j)\|_2^2,
\end{equation}
where $\bm x_i$ is the raw feature that describes the node $v_i$
and $f(\cdot)$ is a mapping that aims to find the optimal representation to 
cater for Assumption \ref{assumption_dist} .
Accordingly, we expect that 
\begin{equation}
    \min \limits_{p(\cdot | v_i), f(\cdot)} \sum \limits_{i=1}^n \mathbb{E}_{v_j \sim p(\cdot | v_i)} d(v_i, v_j) = \sum \limits_{i = 1}^n \sum \limits_{j=1}^n p(v_j | v_i) \cdot d_{ij} ,
\end{equation}
where $d_{ij} = d(v_i, v_j)$ for simplicity. 
To solve the above problem, the \textit{alternative method} is utilized.
However, it is impracticable to solve the above problem directly, 
as its subproblem regarding $p(\cdot | v_i)$ has a trivial solution: 
$p(v_{i} | v_i) = 1$ and $p(v_{j} | v_i) = 0$ if $j \neq i$. 
A universal method is to employ \textit{Regularization Loss Minimization}, and the objective is given as 
\begin{equation}
    \label{obj_RLM}
    \min \limits_{p(\cdot | v_i), f(\cdot)} \sum \limits_{i=1}^n \mathbb{E}_{v_j \sim p(\cdot | v_i)} d_{ij} + \mathcal R_0 (p(\cdot | v_i)) ,
\end{equation}
where $\mathcal R_0(\cdot)$ is some regularization term. 
Let $p_* (\cdot | v_i)$ be the prior distribution and 
$\mathcal{R}(\cdot)$ can be further formulated as 
\begin{equation}
    \mathcal{R}_0 (p(\cdot | v_i)) = \gamma \mathcal{R}(p(\cdot|v_i), p_* (\cdot | v_i)) .
\end{equation}
Without extra information, the prior distribution can be simply set as the 
uniform distribution, which is denoted by $\pi (v_j | v_i)$. 
Then, $\mathcal{R}(\cdot, \cdot)$ represents a function that measures two discrete distributions.
A common choice is Kullback-Leibler divergence (\textit{KL}-divergence).
However, we find that a more simple measurement, 
$\mathcal{R}(p(\cdot | v_i), \pi(\cdot | v_i))$, 
will provide more promising property, as 
\begin{equation}
    \mathcal{R}(p(\cdot | v_i), \pi (\cdot | v_i)) = \sum _{j=1}^n [p(v_j | v_i) - \pi (v_j | v_i)]^2 = \|\bm p_i\|_2^2 + C,
\end{equation}
where $\bm p_i = [p(v_1 | v_i), p(v_2 | v_i), \cdots, p(v_n | v_i)]$ and 
$C$ is a constant irrelavant to $p(\cdot | v_i)$.
Accordingly, problem (\ref{obj_RLM}) becomes \cite{CAN}
\begin{equation}
    \label{obj_CAN}
    \min \limits_{p(\cdot | v_i)} \sum \limits_{i=1}^n \mathbb{E}_{v_j \sim p(\cdot | v_i)} d_{ij} + \gamma_i \|\bm p_i\|_2^2 ,
\end{equation}
where $\gamma_i$ controls the sparsity of each node. 
It can be proved that the solution has exact $k$ non-zero entries (\textit{i.e.,} $k$-sparse), 
if $\gamma_i = \frac{1}{2} (k d_{i\cdot}^{(k+1)} - \sum _{v=1}^k d_{i\cdot}^{(v)})$
where $d_{i\cdot}^{(v)}$ denote the $v$-th smallest value of $\{d_{ij}\}_{j=1}^n$.
Furthermore, it will results in a closed-form solution, \textit{i.e.},
\begin{equation}\label{eq_p}
    p(v_j | v_i) = \frac{(d_{i\cdot}^{(k+1)} - d_{ij})_+}{\sum_{v=1}^k d_{i\cdot}^{(k+1)} - d_{i\cdot}^{(v)}},
\end{equation}
where $(\cdot)_+ = \max(\cdot, 0)$. 
The details can be found in \cite{CAN}.
It should be emphasized that there are two reasons to utilize the above measurement.
On the one hand, 
the $k$-sparse property is preferable for graph-based clustering 
since the underlying probability of nodes from different clusters should approximate 0. 
On the other hand, there is only one hyper-parameter $k$ in it
in our model, which is much easier to tune.

\begin{algorithm}[t]
    \centering
    \caption{Algorithm to optimize AdaGAE}
    \label{alg}
    \begin{algorithmic}
        \REQUIRE Initial sparsity $k_0$, the increment of sparsity $t$ and number of iterations to update weight adjacency $T$. 
        \STATE $Z = X$, $k = k_0$.
        \FOR{$i = 1, 2, \cdots, T$}{
            \STATE Compute $p(\cdot | v_i)$ and $\hat A$ by solving problem (\ref{obj_CAN}) with $k$. 
            \REPEAT 
                \STATE Update GAE with Eq. (\ref{obj}) by the gradient descent.
            \UNTIL{convergence or exceeding maximum iterations.}
            \STATE Get new embedding $Z$.
            \STATE $k = k + t$.
        } 
        \ENDFOR
        \STATE Perform spectral clustering on $\hat A$, or run $k$-means on $Z$.
        \ENSURE Clustering assignments.
    \end{algorithmic}   
\end{algorithm}

\subsection{Graph Auto-Encoder for Weighted Graph}\label{subsection_architecture}
In the previous subsection, we only focus on how to obtain a sparse connectivity
distribution with the fixed $f(\cdot)$.
With the generative distribution, GNN can be easily extended into the non-graph data. 
Then in this subsection, we concentrate on how to obtain $f(\cdot)$ which is implemented by a graph auto-encoder with the estimated graph.
After obtaining the connectivity distribution by solving problem (\ref{obj_CAN}), 
we transform the directed graph to an undirected graph via 
$\mathcal W_{ij} = (p(v_i | v_j) + p(v_j | v_i)) / 2$, 
and the connectivity distribution serves as the reconstruction goal of graph auto-encoder. 
Firstly, we elaborate details of the specific graph auto-encoder.

\textbf{Encoder} ~
As shown in \cite{GCN}, graphs with self-loops show better performance, \textit{i.e.}, $\widetilde A = A + I$. Due to $d_{ii} = 0$, $p(v_i | v_i) \in (0, 1)$ if $k > 1$. Particularly, the weights of self-loops are learned adaptively rather than the primitive $I$. 
Consequently, we can simply set $\widetilde A = \mathcal W$ and 
$\hat A = \widetilde D^{-\frac{1}{2}} \widetilde A \widetilde D^{-\frac{1}{2}}$. 
The encoder consists of multiple GCN layers and aims to transform raw 
features to latent features with the constructed graph structure. 
Specifically speaking, the latent feature generated by $m$ layers is defined as 
\begin{equation}
    Z = \varphi_m(\hat A \varphi_{m-1} ( \cdots \varphi_1(\hat A X W_1) \cdots )W_m) .
\end{equation}

\begin{figure*}[t]
    \centering
    
    \subfigure[Raw features]{
        \label{subfigure_epoch_0}
        \includegraphics[width=0.19\linewidth]{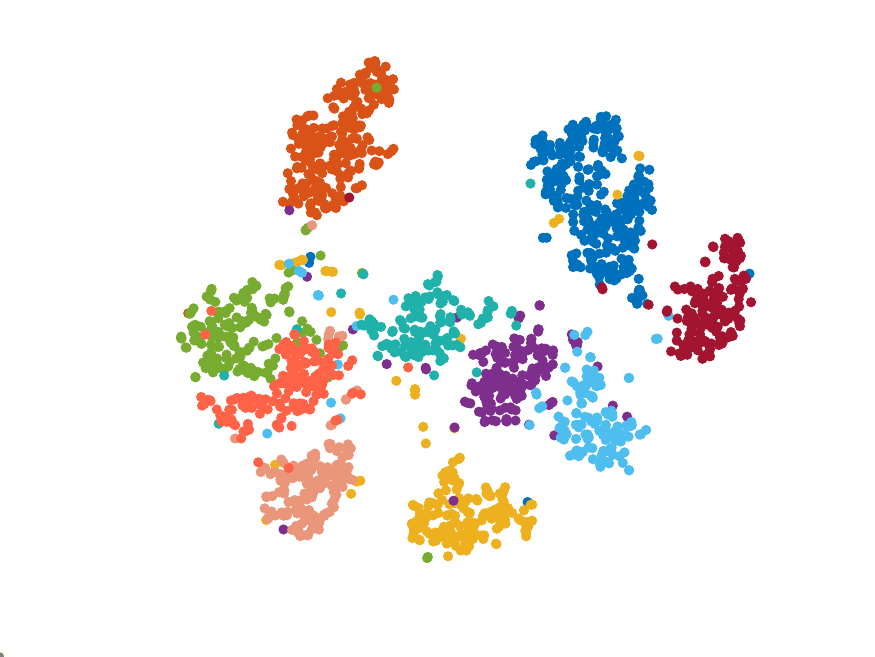}
    }
    \subfigure[Epoch-1]{
        \label{subfigure_epoch_1}
        \includegraphics[width=0.19\linewidth]{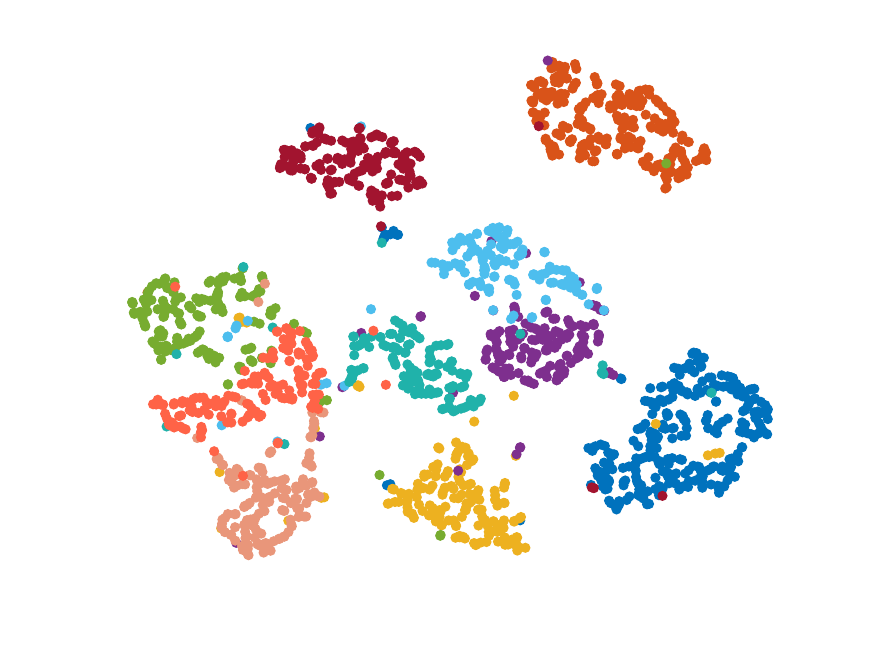}
    }
    \subfigure[Epoch-2]{\includegraphics[width=0.19\linewidth]{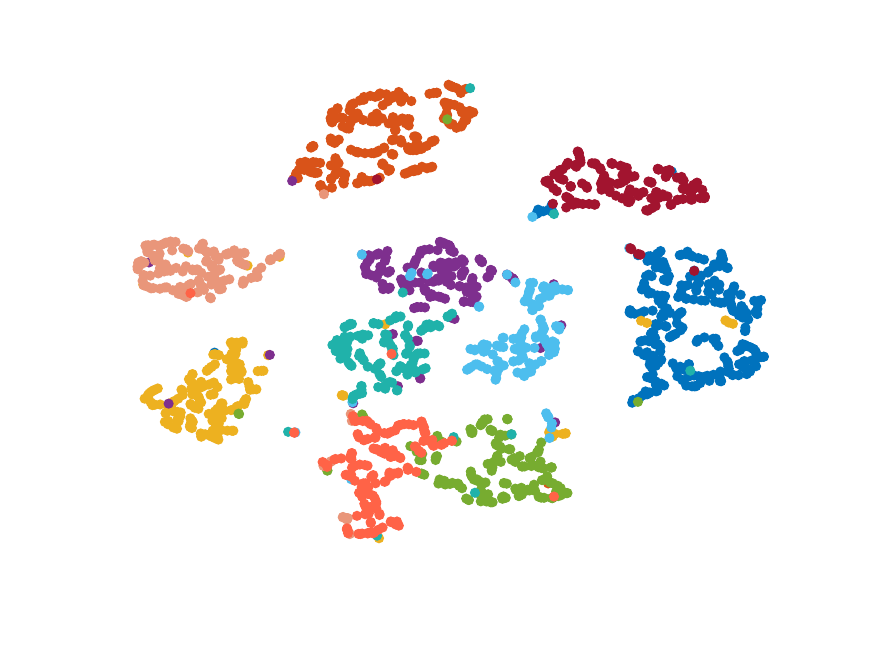}}
    \subfigure[Epoch-3]{\includegraphics[width=0.19\linewidth]{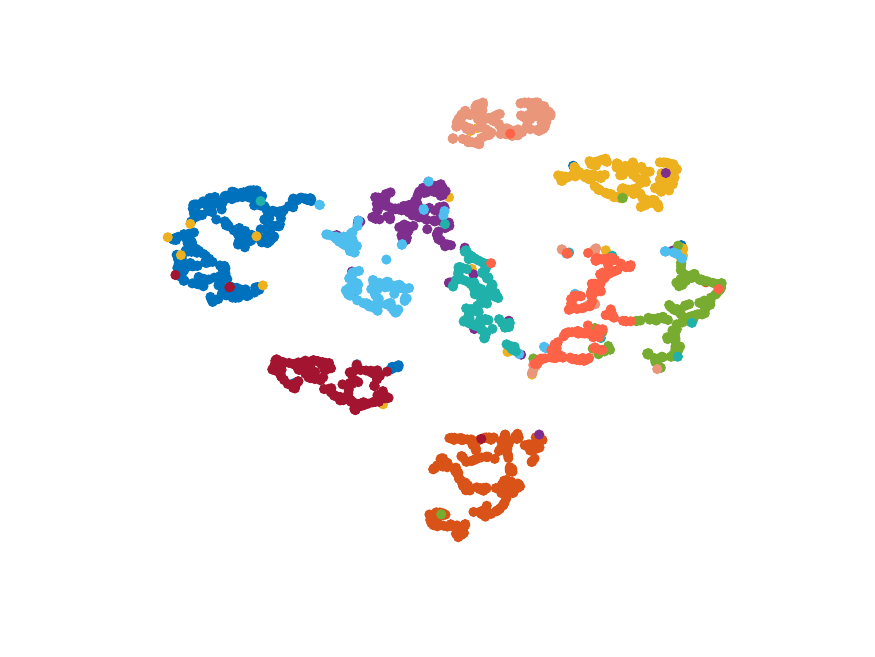}}
    \subfigure[Epoch-4]{\includegraphics[width=0.19\linewidth]{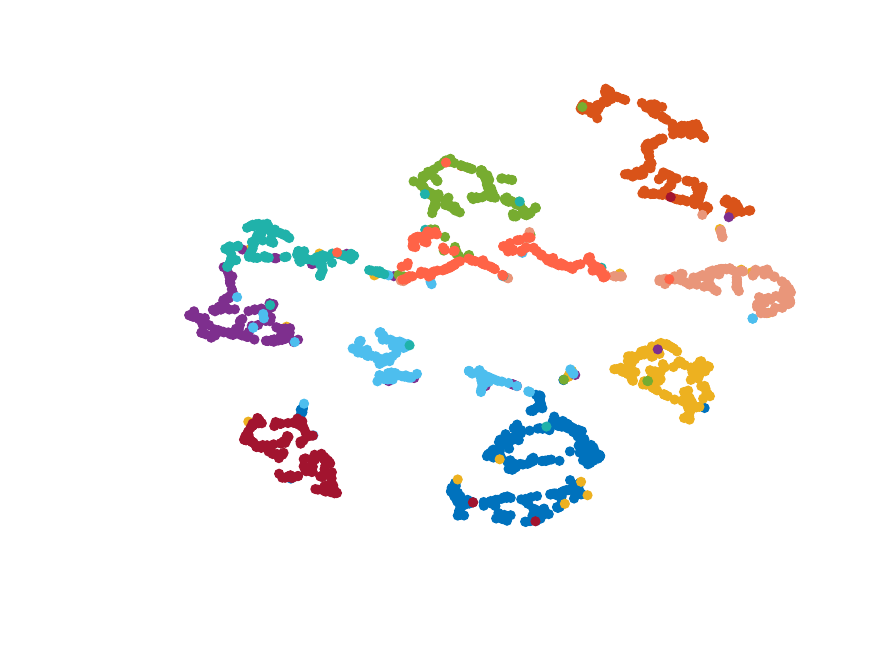}}

    \subfigure[Epoch-5]{\includegraphics[width=0.19\linewidth]{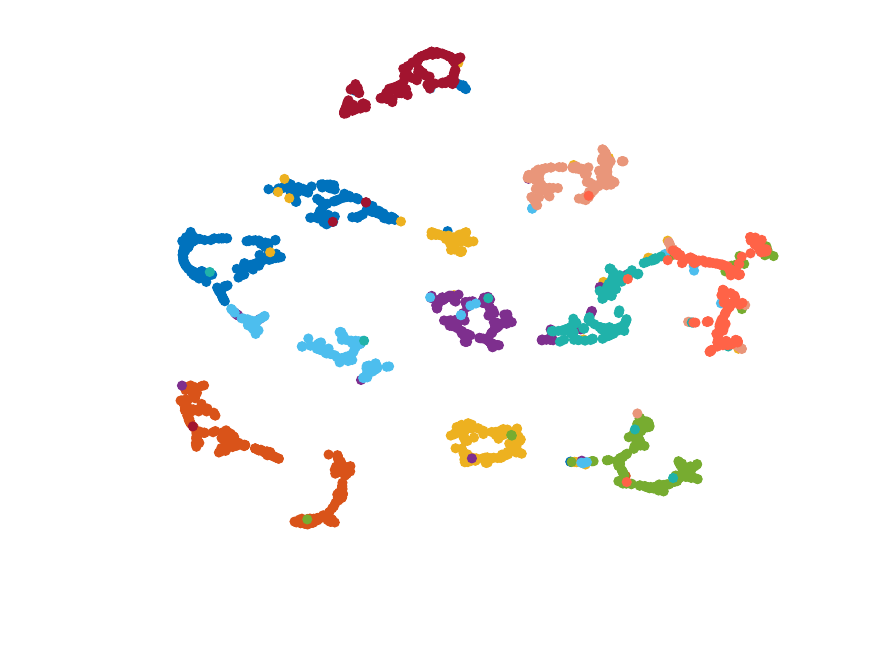}}
    \subfigure[Epoch-6]{\includegraphics[width=0.19\linewidth]{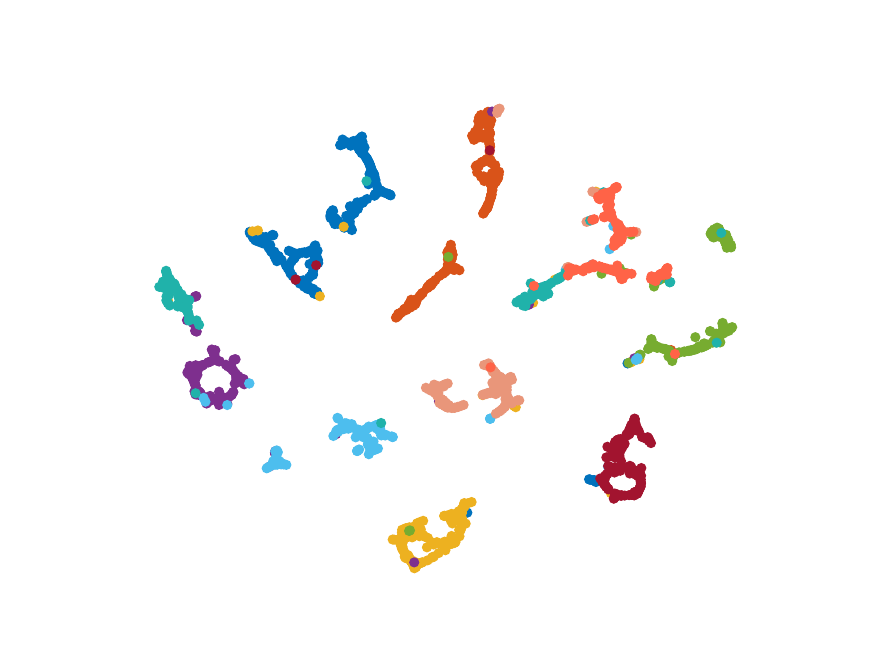}}
    \subfigure[Epoch-7]{\includegraphics[width=0.19\linewidth]{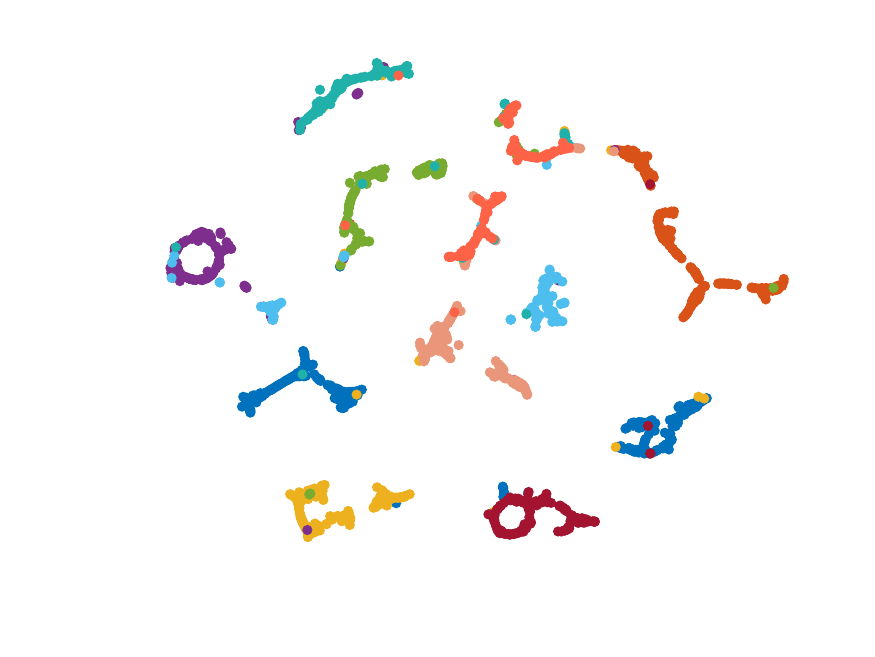}}
    \subfigure[Epoch-8]{
        \label{subfigure_epoch_8}
        \includegraphics[width=0.19\linewidth]{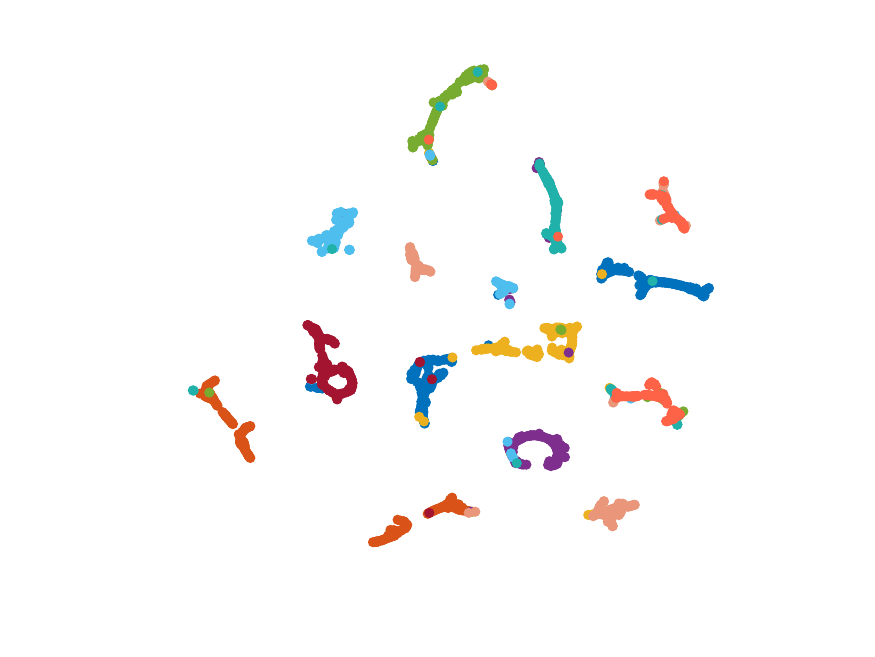}
    }
    \subfigure[Final embedding]{
        \label{subfigure_epoch_9}        
        \includegraphics[width=0.19\linewidth]{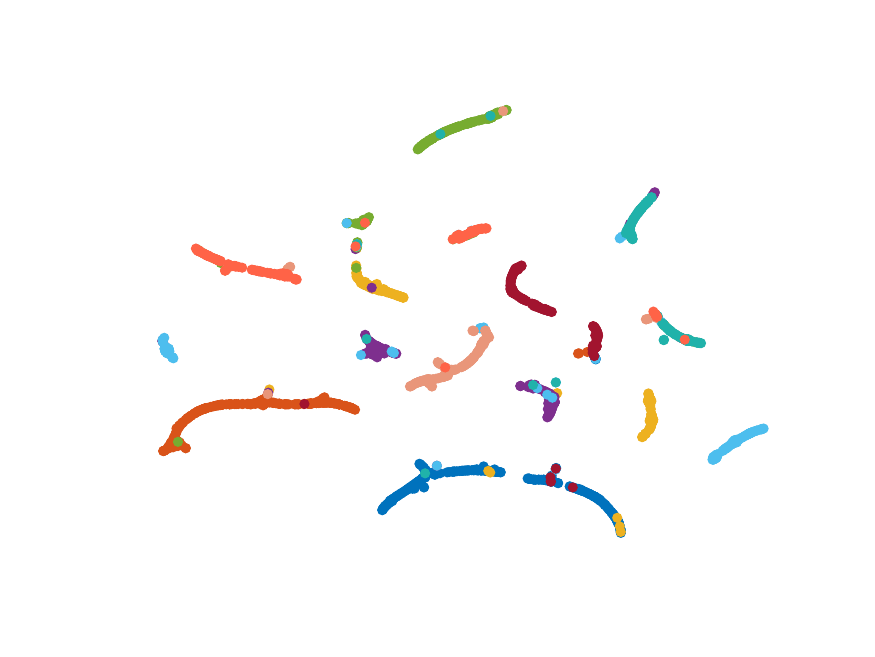}
    }
    \caption{Visualization of the learning process of AdaGAE on USPS. Figure \subref{subfigure_epoch_1}-\subref{subfigure_epoch_8} show the embedding learned by AdaGAE at the $i$-th epoch, while the raw features and the final results are shown in Figure \subref{subfigure_epoch_0} and \subref{subfigure_epoch_9}, respectively. An epoch corresponds to an update of the graph.}
    \label{figure_epoch}
\end{figure*}

\textbf{Decoder} ~
According to Assumption \ref{assumption_dist}, we aim to recover the connectivity distribution $p(v | v_i)$ based on Euclidean distances, 
instead of reconstructing the weight matrix $\widetilde A$ via inner-products. 
Firstly, distances of latent features $Z$ are calculated by $\hat d_{ij} = \|z_i - z_j\|_2^2$. Secondly, the connectivity distribution is reconstructed by a normalization step
\begin{equation}
    \label{define_q}
    q(v_j | v_i) = \frac{\exp({-\hat d_{ij}})}{\sum _{j = 1}^n \exp({-\hat d_{ij}})} .
\end{equation}
The above process can be regarded as inputting $-\hat d_{ij}$ into a SoftMax layer. Clearly, as $\hat d_{ij}$ is smaller, $q(v_{j} | v_i)$ is larger. In other words, the similarity is measured by Euclidean distances rather than inner-products, which are usually used in GAE. To measure the difference between two distributions, Kullback-Leibler (KL) Divergence is therefore utilized and the objective function is defined as 
\begin{equation}
    \label{obj_cross_entropy}
    \begin{split}
        \min \limits_{q(\cdot|v_i)} KL(p \| q) 
        \Leftrightarrow 
      \min \limits_{q(\cdot|v_i)} \sum \limits_{i, j = 1}^n p(v_j | v_i) \log \frac{1}{q(v_j | v_i)} .
    \end{split}
\end{equation}
Note that it is equivalent to minimize the cross entropy, which is widely employed in classification tasks.

\textbf{Loss} ~
Clearly, the representation provided by the graph auto-encoder can be used as 
a valid mapping, \textit{i.e.}, $f(\bm x_i) = \bm z_i$. Therefore, the loss that 
integrates the graph auto-encdoer and problem (\ref{obj_RLM}) is formulated as 
\begin{equation}
    \label{obj}
    \begin{split}
        & \min \limits_{q, Z} KL(p \| q) + \frac{\lambda}{2} \sum \limits_{i=1}^n \mathbb{E}_{p(v_j | v_i)} \|z_i - z_j\|_2^2 + \gamma_i \|\bm p_i\|_2^2\\
        \Leftrightarrow & \min \limits_{q, Z} \sum \limits_{i, j = 1}^n p(v_j | v_i) \log \frac{1}{q(v_j | v_i)} + \frac{\lambda}{2} \sum \limits_{i,j=1}^n \widetilde A_{ij} \|z_i - z_j\|_2^2 \\
        \Leftrightarrow & \min \limits_{q, Z} \sum \limits_{i, j = 1}^n p(v_j | v_i) \log \frac{1}{q(v_j | v_i)} + \lambda tr(Z \widetilde L Z^T) ,
    \end{split}
\end{equation}
where $\widetilde L = \widetilde D - \widetilde A$ and $\lambda$ is a tradeoff parameter to balance the cross entropy term and local consistency penalty term.

\begin{figure*}[t]
    \centering
    \subfigure[Raw features]{\includegraphics[width=0.23\linewidth]{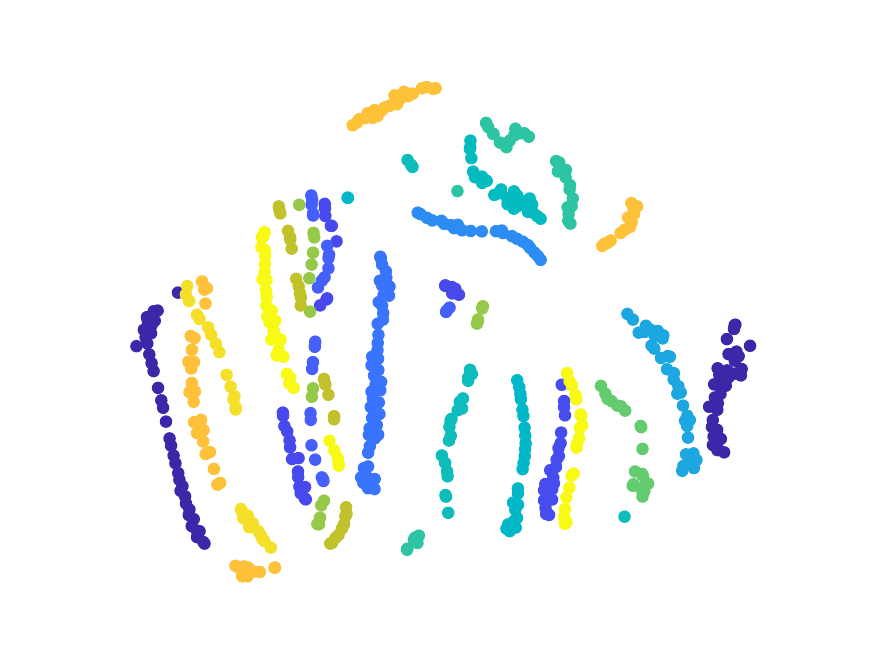}}
    \subfigure[AdaGAE with fixed $k$]{\label{subfigure_degeneration_1}\includegraphics[width=0.23\linewidth]{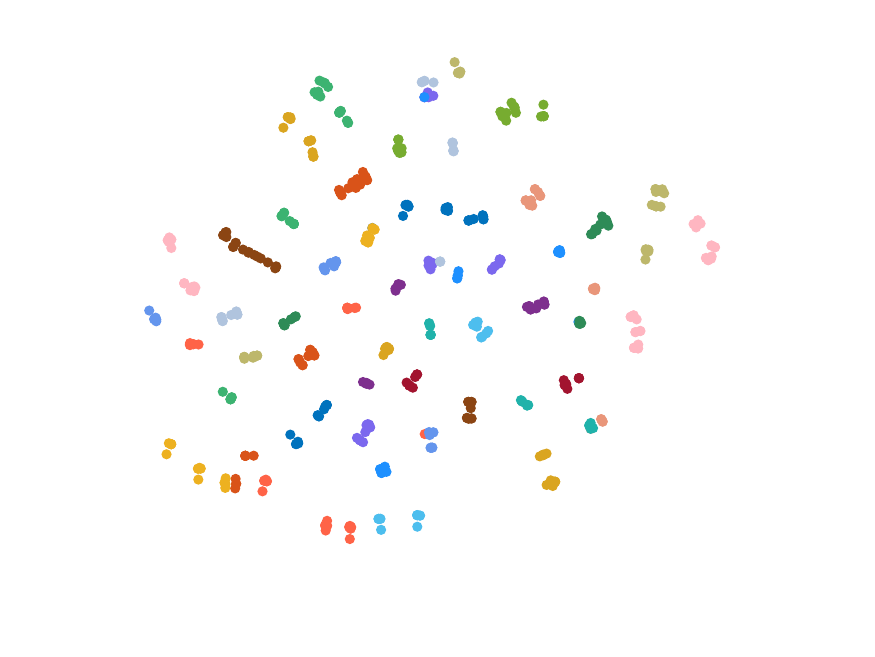}}
    \subfigure[GAE with fixed $\widetilde A$]{\includegraphics[width=0.23\linewidth]{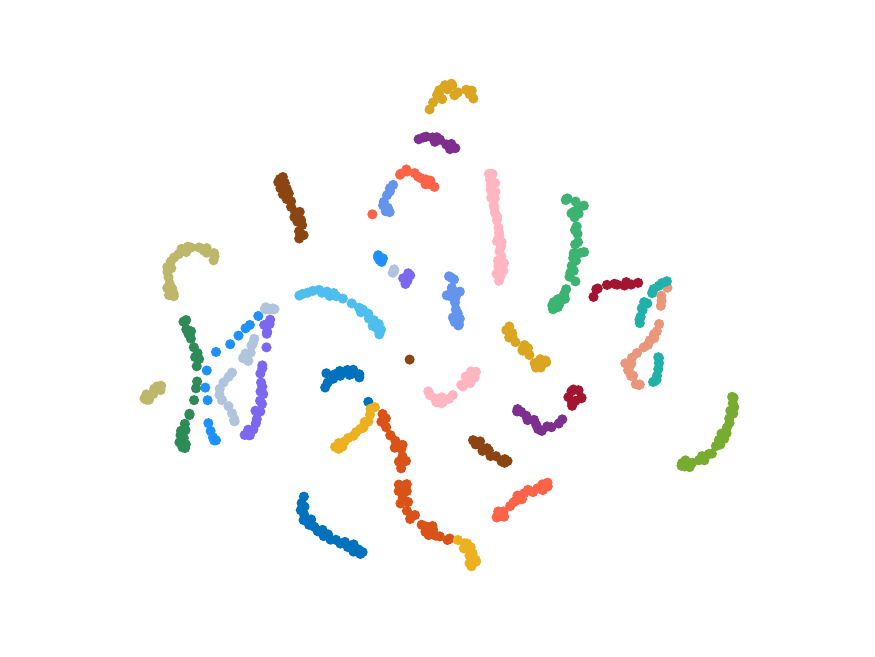}}
    \subfigure[AdaGAE]{\includegraphics[width=0.23\linewidth]{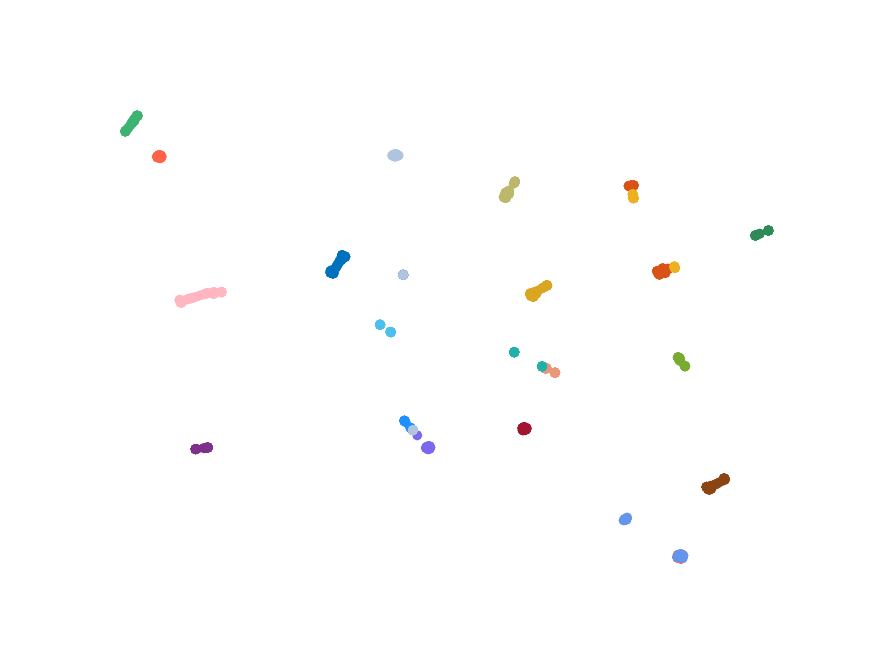}}
    
    \subfigure[Raw features]{\includegraphics[width=0.23\linewidth]{USPS-raw-eps-converted-to}}
    \subfigure[AdaGAE with fixed $k$]{\label{subfigure_degeneration_2}\includegraphics[width=0.23\linewidth]{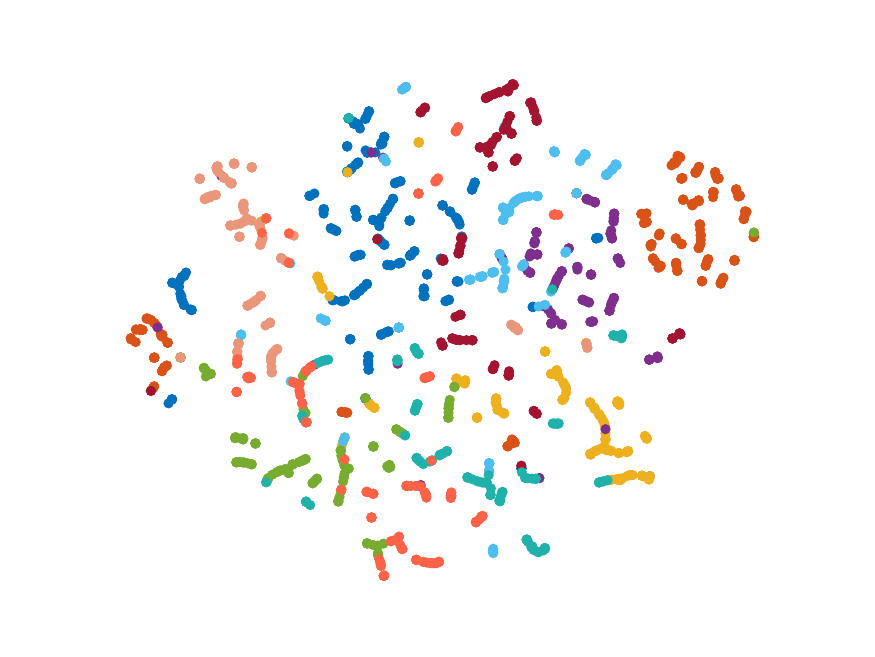}}
    \subfigure[GAE with fixed $\widetilde A$]{\includegraphics[width=0.23\linewidth]{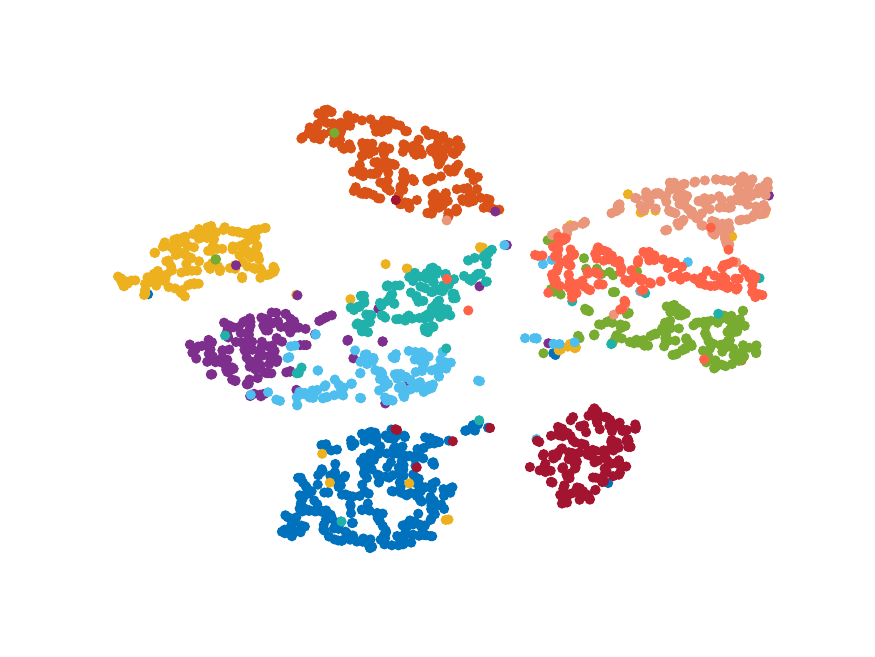}}
    \subfigure[AdaGAE]{\includegraphics[width=0.23\linewidth]{USPS-embedding-eps-converted-to}}

    \caption{t-SNE visualization on UMIST and USPS: The first and second line illustrate results on UMIST and USPS, respectively. 
                Clearly, AdaGAE projects most semblable samples into the analogous embedding. 
                Note that the cohesive embedding is preferable for clustering.}
    \label{figure_visualization}
\end{figure*}

\subsection{Adaptive Graph Auto-Encoder} \label{subsection_adaptive}
In the previous subsection, the weighted adjacency matrix is viewed as fixed during training. 
However, the weighted adjacency matrix is computed by optimizing Eq. (\ref{obj_CAN}) at first.
If the graph is not updated, then only the low-level information is utilized in GCN.  
The whole clustering process should contain connectivity learning and hence, the weighted adjacency should be updated adaptively during training. 
The adaptive evolution of the graph has two merits: 
1) The model is induced to exploit the high-level information behind the data. 
The high-level information exploitation can be also regarded as a promotion of the GAE with shallow architecture. 
2) It helps to correct the wrong links among samples that are caused by the low-level relationships. 

A feasible approach is to recompute the connectivity distribution based on the embedding $Z$, which contains the potential  manifold information of data. 
However, the following theorem shows that the simple update based on latent representations may lead to the collapse.

\begin{myTheorem}
    \label{theo_degeneration}
    Let $p^{(k)}(\cdot | v_i)$ be the $k$-largest $p(\cdot | v_i)$ and $\hat d_{ij} = \|z_i - z_j\|_2$ 
    where $z_i$ is generated by GAE with sparsity $k$. 
    Let $\hat p(\cdot | v_i) = \arg \min _{p(\cdot | v_i)} \sum _{i=1}^n \mathbb{E}_{v_j \sim p(\cdot | v_i)} \hat d_{i j} + \gamma_i \|\bm p_i\|_2^2$ where $\gamma_{i}$ controls the sparsity as $k$. 
    For a given constant $\mu \geq 0$, if $|p_j(\cdot | v_i) - q_j(\cdot | v_i)| \leq \varepsilon \leq \mu$ hold for any $i$, then $\forall j, l \leq k$,
    \begin{equation}
        | \hat p_j(\cdot | v_i) - \frac{1}{k}| \leq \mathcal{O}(\frac{\log (1 / \mu)}{\log (1 / \varepsilon)}) 
    \end{equation}
    Therefore, with $\varepsilon \rightarrow 0$, $\hat p(\cdot | v_i)$ will degenerate into a sparse and uniform distribution such that the weighted graph degenerates into an unweighted graph. 
\end{myTheorem}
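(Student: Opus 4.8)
The plan is to reduce everything to the closed-form minimizer of the $\ell_2$-regularized problem (\ref{obj_CAN}) underlying Theorem \ref{theorem_sparsity}, and then to control the distances $\hat d_{ij}$ through the decoder's SoftMax relation (\ref{define_q}). First I would record the explicit solution of (\ref{obj_CAN}): writing the simplex-constrained KKT conditions, the minimizer supported on the $k$ smallest distances is $\hat p^{(v)}(\cdot|v_i) = (\eta_i - \hat d_i^{(v)})/(2\gamma_i)$ for $v\le k$ and $0$ otherwise, where $\hat d_i^{(v)}$ is the $v$-th smallest of $\{\hat d_{ij}\}_j$ and, for the sparsity-$k$ value of $\gamma_i$ in (\ref{define_gamma}), the multiplier is $\eta_i=\hat d_i^{(k+1)}$, so that $2\gamma_i=\sum_{u=1}^k(\hat d_i^{(k+1)}-\hat d_i^{(u)})$. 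The immediate consequence is that for two ranks $v,w$ inside the ($k$-element) support, $\hat p^{(v)}-\hat p^{(w)}=(\hat d_i^{(w)}-\hat d_i^{(v)})/(2\gamma_i)$; hence bounding the spread of $\hat p$ reduces to upper-bounding the in-support distance window and lower-bounding $\gamma_i$.

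Next I would invert the decoder: since $q(v_j|v_i)=\exp(-\hat d_{ij})/\sum_l\exp(-\hat d_{il})$, we have $\hat d_{ij}=-\log q(v_j|v_i)-C_i$ with $C_i=\log\sum_l\exp(-\hat d_{il})$ independent of $j$, so every distance gap is a log-ratio of reconstructed probabilities and $C_i$ cancels. The hypotheses then pin down $q$: because $p$ is $k$-sparse with $p^{(k)}(\cdot|v_i)\ge\sqrt\varepsilon$ and $|q-p|\le\varepsilon$, the $k$ largest reconstructed probabilities obey $q^{(v)}\ge p^{(k)}-\varepsilon\ge\sqrt\varepsilon-\varepsilon$ for $v\le k$, while the first off-support value satisfies $q^{(k+1)}\le\varepsilon$ (as $p^{(k+1)}=0$). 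Translating through the previous step, the in-support window is bounded by $|\hat d_i^{(w)}-\hat d_i^{(v)}|=|\log(q^{(v)}/q^{(w)})|\le-\log(\sqrt\varepsilon-\varepsilon)$ (using $q\le 1$), whereas each denominator term is $\hat d_i^{(k+1)}-\hat d_i^{(u)}=\log(q^{(u)}/q^{(k+1)})\ge\log\frac{\sqrt\varepsilon-\varepsilon}{\varepsilon}$, giving $2\gamma_i\ge k\log\frac{\sqrt\varepsilon-\varepsilon}{\varepsilon}$.

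Combining the two estimates yields $|\hat p^{(v)}-\hat p^{(w)}|\le\frac{-\log(\sqrt\varepsilon-\varepsilon)}{k(\log(\sqrt\varepsilon-\varepsilon)-\log\varepsilon)}$, and dividing numerator and denominator by the (negative) quantity $\log(\sqrt\varepsilon-\varepsilon)$ rewrites this as the stated $\frac{1}{k}\cdot\frac{1}{\log\varepsilon/\log(\sqrt\varepsilon-\varepsilon)-1}$. Finally, since $\sqrt\varepsilon-\varepsilon=\sqrt\varepsilon(1-\sqrt\varepsilon)$ gives $\log(\sqrt\varepsilon-\varepsilon)=\tfrac12\log\varepsilon+\log(1-\sqrt\varepsilon)\sim\tfrac12\log\varepsilon$ as $\varepsilon\to0$, the ratio $\log\varepsilon/\log(\sqrt\varepsilon-\varepsilon)\to2$, so the bound tends to $1/k$; as the $k$ retained weights sum to one, this flattening of their pairwise gaps is precisely the degeneration of $\hat p(\cdot|v_i)$ toward the uniform distribution on its support, i.e.\ the weighted graph collapses into an unweighted one.

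I expect the main obstacle to be the lower bound on $\gamma_i$, which is exactly the mechanism of the collapse: it is the gap between the bounded in-support distances and the blown-up $\hat d_i^{(k+1)}$ (forced large by $q^{(k+1)}\le\varepsilon$) that inflates the regularizer and flattens $\hat p$. The delicate point is using the two different thresholds consistently --- $\sqrt\varepsilon$ for the support and $\varepsilon$ off-support --- and tracking signs, since all the logarithms involved are negative; it is this asymmetry that produces the $\log\varepsilon/\log(\sqrt\varepsilon-\varepsilon)$ structure rather than a cruder estimate. A routine check is that the closed form of (\ref{obj_CAN}) requires the simplex multiplier to lie in $(\hat d_i^{(k)},\hat d_i^{(k+1)}]$, which the sparsity-$k$ choice of $\gamma_i$ in (\ref{define_gamma}) guarantees.
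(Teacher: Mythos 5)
Your proposal is correct and takes essentially the same route as the paper's own proof: it uses the closed-form solution of problem (\ref{obj_CAN}), inverts the SoftMax decoder so that the hypotheses $p^{(k)} \geq \sqrt{\varepsilon}$, $|q-p|\leq\varepsilon$ become the distance bounds $\hat d_k - \hat d_1 \leq -\log(\sqrt{\varepsilon}-\varepsilon)$ and $\hat d_{k+1} - \hat d_u \geq \log\frac{\sqrt{\varepsilon}-\varepsilon}{\varepsilon}$, and then combines them by exactly the same algebra to reach the stated bound. One remark in your favor: your computation that the bound tends to $\frac{1}{k}$ (not $0$) as $\varepsilon \to 0$ is more careful than the paper's own closing claim that $|\hat p_i - \hat p_j| \to 0$, which does not actually follow from the derived bound.
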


Intuitively, the unweighted graph is indeed a bad choice for clustering. 
From another aspect, the degeneration to unweighted graph results in the appearance 
of diverse sub-clusters, which is illustrated in Figure \ref{figure_visualization}. 
Specifically, the perfect reconstruction indicates that 
some points from the identical class are projected into different sub-clusters.  
However, these tiny clusters scatter randomly in the latent space, which is shown in Figures \ref{subfigure_degeneration_1} and \ref{subfigure_degeneration_2}.
The phenomenon leads to the collapse of automatic update. 
Therefore, the update step with the same sparsity coefficient $k$ may result in collapse.  To address this problem, we assume that 
\begin{myAssumption}
    \label{assumption}
    Suppose that the sparse and weighted adjacency contains sufficient information. 
    Then, with latent representations, samples of an identical cluster become more cohesive measured by Euclidean distance.
\end{myAssumption}
According to the above assumption, samples from a cluster are more likely to lie in a local area after GAE mapping. 
The motivation to avoid the collapse is to build connection between those sub-clusters 
before they become thoroughly random in the latent space. 
Hence, the sparsity coefficient $k$ increases when updating weight sparsity. The step size $t$ which $k$ increases with needs to be discussed. In an ideal situation, we can define the upper bound of $k$ as 
\begin{equation}
    k_m^* = \min (| \mathcal{C}_1 |, | \mathcal{C}_2 |, \cdots, | \mathcal{C}_c |) ,
\end{equation}
where $\mathcal{C}_i$ denotes the $i$-th cluster and $|\mathcal{C}_i|$ is the size of $\mathcal{C}_i$. Although $|\mathcal C_i|$ is not known, we can define $k_m$ empirically to ensure $k_m \leq k_m^*$. For instance, $k_m$ can be set as $\lfloor \frac{n}{c} \rfloor$ or $\lfloor \frac{n}{2c} \rfloor$. Accordingly, the step size $t = \frac{k_m - k_0}{T}$ where $T$ is the number of iterations to update the weight adjacency. 

To sum up, Algorithm \ref{alg} summarizes the whole process to optimize AdaGAE.

\subsubsection{Another Explanation of Degeneration}
Theorem \ref{theo_entropy_equal} demonstrates that the SoftMax output layer 
with $-\hat d_{ij}$ is equivalent to solve problem (\ref{obj_RLM}) with a 
totally different implementation of $\mathcal{R}(\cdot, \cdot)$. 
\begin{myTheorem} \label{theo_entropy_equal}
    The decoder of AdaGAE is equivalent to solve the problem 
        $\min _{q(\cdot | v_i)} \sum _{i = 1}^n \mathbb E_{v_j \sim q(\cdot | v_i)} \hat d_{ij} + KL(q(\cdot | v_i) \| p(\cdot | v_i))$
    where $KL(q \| p) = \sum_i q_i \log \frac{q_i}{p_i}$ represents the KL-divergence of two discrete distributions.
\end{myTheorem}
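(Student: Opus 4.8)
The plan is to show that the SoftMax decoder of Eq.~(\ref{define_q}) is exactly the minimizer of the stated objective, by observing that the objective \emph{decouples} over nodes and that each per-node subproblem is a strictly convex program over the probability simplex with a closed-form optimum.

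First I would fix a node $v_i$ and isolate the terms that depend on the vector $\bm p_i$-analogue $\bm q_i = (q(v_1 | v_i), \dots, q(v_n | v_i))$. Using $\mathbb E_{v_j \sim q(\cdot | v_i)} \hat d_{ij} = \sum_{j=1}^n q(v_j | v_i) \hat d_{ij}$ and $-H_i(v) = \sum_{j=1}^n q(v_j | v_i) \log q(v_j | v_i)$, the total cost splits into $n$ independent per-node objectives $J_i(\bm q_i) = \sum_{j=1}^n q(v_j | v_i) \hat d_{ij} + \sum_{j=1}^n q(v_j | v_i) \log q(v_j | v_i)$, each subject to the simplex constraints $\sum_{j=1}^n q(v_j | v_i) = 1$ and $q(v_j | v_i) \ge 0$. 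Minimizing the global objective is therefore equivalent to solving each $J_i$ separately, which is the reduction that makes the argument elementary.

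Next I would attach a Lagrange multiplier $\eta_i$ to the normalization constraint (temporarily dropping the nonnegativity constraints, to be justified afterward) and form $\mathcal L = J_i(\bm q_i) - \eta_i ( \sum_{j=1}^n q(v_j | v_i) - 1 )$. Setting $\partial \mathcal L / \partial q(v_j | v_i) = 0$ yields the stationarity condition
\[
    \hat d_{ij} + \log q(v_j | v_i) + 1 - \eta_i = 0 ,
\]
so that $q(v_j | v_i) = \exp(\eta_i - 1) \exp(-\hat d_{ij})$. Imposing $\sum_{j=1}^n q(v_j | v_i) = 1$ then fixes $\exp(\eta_i - 1) = 1 / \sum_{l=1}^n \exp(-\hat d_{il})$, which reproduces verbatim the decoder normalization of Eq.~(\ref{define_q}).

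Finally I would confirm that this stationary point is the genuine global minimizer rather than a saddle, and that dropping the inequality constraints was legitimate. The Hessian of $J_i$ is diagonal with entries $1 / q(v_j | v_i) > 0$, so $J_i$ is strictly convex on the (convex) simplex, which forces a unique minimizer coinciding with the stationary point above; the nonnegativity constraints are automatically satisfied because $q(v_j | v_i) = \exp(\cdot) > 0$, so they are inactive and could indeed be ignored. This identifies the SoftMax decoder as exactly $\arg\min$ of $\sum_{i=1}^n \mathbb E_{v_j \sim q(\cdot | v_i)} \hat d_{ij} - H_i(v)$, establishing the claim. The only real subtlety, rather than a genuine obstacle, is precisely this convexity-and-interior-solution bookkeeping; the Lagrangian computation itself is routine.
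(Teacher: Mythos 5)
Your proposal is correct and follows essentially the same route as the paper's proof: decompose the objective into per-node subproblems over the simplex, apply Lagrangian stationarity, and solve the normalization to recover the SoftMax decoder of Eq.~(\ref{define_q}). The only difference is cosmetic --- the paper carries explicit multipliers $\beta_j$ for the nonnegativity constraints and sets them to zero via complementary slackness, while you drop those constraints and verify positivity (plus strict convexity) afterward; both handle the same bookkeeping.
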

Overall, with the fixed $k$, better reconstruction means the worse update.
Therefore, the perfect reconstruction may lead to bad performance.

\subsection{Spectral Analysis}
As mentioned in the above subsection, AdaGAE generates a weighted graph with adaptive self-loops. Analogous to SGC \cite{SimplifyingGCN}, Theorem \ref{theo_spectrum} shows that adaptive self-loops also reduce the spectrum of the normalized Laplacian, \textit{i.e.}, it smooths the Laplacian.
\begin{myTheorem} \label{theo_spectrum}
    Let $\widetilde A' = \widetilde A - diag(\widetilde A)$ and $\hat A' = \widetilde D'^{-\frac{1}{2}} \widetilde A' \widetilde D'^{-\frac{1}{2}}$. According to eigenvalue decomposition, suppose $I - \hat A = Q \Lambda Q^T$ and $I - \hat A' =  Q'\Lambda' Q'^T$. The following inequality always holds
    \begin{equation}
        0 = \lambda_1 = \lambda'_1 < \lambda_n < \lambda'_n ,
    \end{equation}
    where $\lambda_i$ and $\lambda_i'$ denote the $i$-th smallest eigenvalue of $I - \hat A$ and $I - \hat A'$, respectively.
\end{myTheorem}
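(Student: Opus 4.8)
The plan is to reduce both spectral comparisons to the \emph{same} unnormalized Laplacian, exploiting the fact that adding self-loops leaves it invariant. Writing $S = \mathrm{diag}(\widetilde A)$ for the self-loop part, we have $\widetilde A = \widetilde A' + S$ and $\widetilde D = \widetilde D' + S$, since each self-loop weight contributes equally to the degree and to the diagonal of the adjacency. Consequently the self-loop terms cancel in the unnormalized Laplacian,
\[
\widetilde D - \widetilde A = \widetilde D' - \widetilde A' =: L_{\mathrm{un}} .
\]
I would first record the standard fact that $I - \hat A$ and $I - \hat A'$ are similar (via $v = \widetilde D^{\frac12} y$ and $v = \widetilde D'^{\frac12} y$, respectively) to the generalized eigenproblems $L_{\mathrm{un}}\, y = \lambda \widetilde D\, y$ and $L_{\mathrm{un}}\, y = \lambda \widetilde D'\, y$, which yields the Rayleigh-quotient characterizations
\[
\lambda_n = \max_{y \neq 0} \frac{y^T L_{\mathrm{un}} y}{y^T \widetilde D y}, \qquad \lambda'_n = \max_{y \neq 0} \frac{y^T L_{\mathrm{un}} y}{y^T \widetilde D' y} .
\]

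For the left-hand equality $\lambda_1 = \lambda'_1 = 0$, note that $L_{\mathrm{un}}$ is positive semidefinite because its quadratic form is $y^T L_{\mathrm{un}} y = \tfrac12 \sum_{i,j} \widetilde A'_{ij}(y_i - y_j)^2 \geq 0$; hence both $I - \hat A$ and $I - \hat A'$ are PSD and all their eigenvalues are nonnegative. The value $0$ is attained since $L_{\mathrm{un}} \mathbf 1 = (\widetilde D - \widetilde A)\mathbf 1 = 0$, so $y = \mathbf 1$ is a generalized eigenvector with eigenvalue $0$, i.e. $\widetilde D^{\frac12}\mathbf 1$ and $\widetilde D'^{\frac12}\mathbf 1$ lie in the null spaces of $I - \hat A$ and $I - \hat A'$. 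Thus $\lambda_1 = \lambda'_1 = 0$.

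For the chain $0 < \lambda_n < \lambda'_n$, the positivity $\lambda_n > 0$ holds because the graph carries edges, so $L_{\mathrm{un}} \neq 0$ and its largest generalized eigenvalue is strictly positive. For the comparison, let $y^*$ maximize the first quotient, so that $\lambda_n = y^{*T} L_{\mathrm{un}} y^* / y^{*T} \widetilde D y^*$. Plugging the \emph{same} $y^*$ into the second quotient and using $\widetilde D = \widetilde D' + S$ with $S \succ 0$ (every self-loop weight $\widetilde A_{ii} = p(v_i \mid v_i) > 0$ when $k > 1$, as stated in Section~\ref{section_GAE}) gives $y^{*T} \widetilde D' y^* < y^{*T} \widetilde D y^*$, so the denominator strictly shrinks while the positive numerator is unchanged. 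Therefore
\[
\lambda'_n \geq \frac{y^{*T} L_{\mathrm{un}} y^*}{y^{*T} \widetilde D' y^*} > \frac{y^{*T} L_{\mathrm{un}} y^*}{y^{*T} \widetilde D y^*} = \lambda_n .
\]

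The main obstacle is securing the \emph{strict} inequality rather than a mere $\leq$: it hinges on simultaneously guaranteeing that the numerator is nonzero (which needs $\lambda_n > 0$, i.e. the graph is not edgeless) and that the denominator genuinely decreases (which needs $y^{*T} S y^* = \sum_i \widetilde A_{ii}\,(y^*_i)^2 > 0$, i.e. all self-loop weights are positive and the maximizing eigenvector is nonzero). Both conditions are delivered by the $k$-sparse construction with $k > 1$, whose weighted graph has strictly positive self-loops and nontrivial edges, so the strict inequality stands.
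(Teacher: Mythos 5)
Your proof is correct, and it takes a genuinely different route from the paper's. The paper imitates the spectral analysis of SGC: it splits $\hat A$ into its diagonal part $\widetilde D^{-\frac{1}{2}}M\widetilde D^{-\frac{1}{2}}$ (with $M=\mathrm{diag}(\hat A)$) plus the normalized off-diagonal part, invokes \emph{without proof} a lemma adapted from SGC that compares the extreme eigenvalues of $\widetilde A'$ normalized by $\widetilde D$ against those of $\widetilde A'$ normalized by $\widetilde D'$, and then chains inequalities to reach $\lambda_n \leq 1-\beta_1 = \lambda'_n$. You instead exploit the invariance of the unnormalized Laplacian under self-loops, $\widetilde D - \widetilde A = \widetilde D' - \widetilde A'$, which turns both top eigenvalues into maxima of one and the same quadratic form over two different denominators, and the comparison drops out of evaluating the second Rayleigh quotient at the maximizer $y^*$ of the first. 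Your route delivers three things the paper's write-up does not: it is self-contained (no borrowed, unproven lemma); it proves the $0=\lambda_1=\lambda'_1$ portion of the statement, which the paper's appendix never addresses; and it yields the \emph{strict} inequality $\lambda_n<\lambda'_n$, whereas the paper's chain consists solely of non-strict inequalities and so, as written, only establishes $\lambda_n\leq\lambda'_n$. What the paper's approach buys in exchange is quantitative information: the SGC-style lemma bounds the amount of shrinkage in terms of $\min_i \hat A_{ii}/\widetilde D'_{ii}$, i.e., the weight of the self-loops relative to the degrees, which your argument does not provide. Your treatment of strictness is also properly grounded: $y^{*T}Sy^*>0$ requires every $\widetilde A_{ii}>0$ and $y^*\neq 0$, and $\lambda_n>0$ requires at least one off-diagonal edge, both of which hold for the paper's $k$-sparse construction with $k>1$ (where $p(v_i\mid v_i)\in(0,1)$ and each node has nontrivial neighbors), so no gap remains.
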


\subsection{Computational Complexity}
In the phase of training GCN, the most time-consuming operation is to compute 
$\hat A T_i$ where $T_i = XW_i \in \mathbb R^{n \times d_i}$. 
Since $\hat A$ is sparse, the amount of non-zero entries is denoted by $|\mathcal{E}|$. 
Overall, the computational complexity of AdaGAE with $m$ layers is $O(l |\mathcal{E}| \sum_{i=1}^m d_i)$ 
where $l$ is the total number of gradient descent.
Therefore, the computational complexity of each iteration to update GCN is $O(|\mathcal{E}| d_i)$. 
To construct the graph matrix, $A$, $O(n^2)$ time is required which is same with the spectral clustering. 
After the embedding is obtained, the complexity to get clustering assignments is $O(n^2 c)$ (using the spectral clustering) or $O(ndc)$ (using $k$-means).

\begin{table*}[t]
    \centering
    \setlength{\abovecaptionskip}{0cm}
    \setlength{\tabcolsep}{2mm}
    \setlength{\belowcaptionskip}{-10mm}
    \renewcommand\arraystretch{1.1}
    \caption{ACC (\%)}
    \label{table_acc}
    \begin{tabular}{l c c c c c c c c c c }
    \hline
    
    \hline
        Methods & Text & 20news & Isolet & Segment & PALM & UMIST & JAFFE & COIL20 & USPS & MNIST \\
    \hline
    \hline
        K-Means & \underline{86.34} & 25.26 & 59.11 & 54.97 & 70.39 & 42.87 & 72.39 & 58.26 & 64.67 & 55.87 \\
        CAN \cite{CAN} & 50.31 & 25.39 & 61.47 & 49.13 & 88.10 & \underline{69.62} & \underline{96.71} & \underline{84.10} & 67.96 & 74.85 \\
        RCut \cite{RatioCut} & 53.44 & 28.06 & \underline{65.96} & 43.23 & 61.36 & 61.31 & 83.62 & 69.57 & 63.86 & 63.52 \\
        NCut \cite{SC} & 55.34 & 31.26 & 60.06 & 51.74 & 61.19 & 60.05 & 80.44 & 70.28 & 63.50 & 64.90 \\
        DEC \cite{DEC} & 50.62 & 25.11 & 34.17 & 14.29 & 27.45 & 36.47 & 62.95 & 74.35 & 42.30 & 81.22\\
        DFKM \cite{DFKM} & 52.77 & 29.65 & 51.99 & 51.47 & 67.45 & 45.47 & 90.83 & 60.21 & {73.42} & 48.37\\
        GAE \cite{GAE} & 53.45 & 25.59 & 61.41 & \underline{60.43} & \underline{88.45} & 61.91 & 94.37 & 69.10 & \underline{76.63} & 70.22\\
        MGAE \cite{MGAE} & 50.48 & \underline{41.47} & 46.31 & 50.44 & 51.47 & 49.19 & 87.22 & 60.99 & 64.13 & 55.17\\
        GALA \cite{GALA} & 50.31 & 28.16 & 53.59 & 49.57 & 79.45 & 41.39 & 94.37 & 80.00 & 67.64 & 74.26 \\
        SDCN \cite{SDCN} & 55.70 & 31.06 & 57.56 & 50.95 & 27.75 & 27.65 & 33.80 & 41.04 & 37.43 & 66.85 \\ 
        SpectralNet \cite{SpectralNet} & 48.20 & 25.37 & 52.98 & 51.39 & 88.30 & 52.53 & 88.26 & 75.63 & 70.93 & \underline{82.03} \\
    \hline
        GAE$^\dag$  & 50.31 & 33.55 & 62.05 & 47.66 & 82.10 & 72.17 & 96.71 & 85.97 & 79.40 & 71.07 \\
        Method-A & 50.00 & 38.54 & 66.15 & 41.13 & 88.30 & 73.22 & 96.71 & 92.43 & 67.48 & 73.87 \\
        Method-B & 51.13 & 33.35 & 54.49 & 38.66 & 91.80 & 32.00 & 47.42 & 33.82 & 34.09 & 14.04 \\
        AdaGAE & \textbf{89.31} & \textbf{77.28} & \textbf{66.22} & \textbf{60.95} & \textbf{95.25}& \textbf{83.48} & \textbf{97.27} & \textbf{93.75} & \textbf{91.96} & \textbf{92.88} \\
    \hline
    
    \hline
    \end{tabular}
    
\end{table*}

\section{Experiments} \label{section_experiments}
In this section, details of AdaGAE are demonstrated and the results are shown. The visualization supports the theoretical analyses mentioned in the previous section. 

\begin{table}[h]
    \centering
    \setlength{\tabcolsep}{3mm}
    \setlength{\abovecaptionskip}{0cm}
    \setlength{\belowcaptionskip}{-10mm}
    \caption{Information of Datasets}
    \begin{tabular}{l c c c}
    \hline
    
    \hline
        Name & \# Features & \# Size & \# Classes \\
    \hline
    \hline
        Text & 7511 & 1946 & 2 \\
        20news & 8014 & 3970 & 4 \\
        Isolet & 617 & 1560 & 26 \\
        Segment & 19 & 2310 & 7 \\
        PALM & 256 & 2000 & 100 \\
        UMIST & 1024 & 575 & 20 \\
        JAFFE & 1024 & 213 & 10 \\
        COIL20 & 1024 & 1440 & 20 \\
        USPS & 256 & 1854 & 10 \\
        MNIST & 784 & 10000 & 10 \\
    \hline
    \end{tabular}
    \label{table_datasets}
\end{table}

\subsection{Datasets and Compared Methods}
AdaGAE is evaluated on 10 datasets of different types, 
including 2 text datasets (\textit{Text} and \textit{20news}), 
3 UCI \cite{UCI} datasets (\textit{Isolet}, \textit{Segment}, and \textit{PALM}), 
and 5 image datasets (\textit{UMIST} \cite{UMIST}, \textit{JAFFE} \cite{JAFFE}, \textit{COIL20} \cite{COIL20}, \textit{USPS} \cite{USPS}, and \textit{MNIST-test} \cite{MNIST}). 
Note that USPS used in our experiments is a subset with 1854 samples of the whole dataset. 
To keep notations simple, MNIST-test is denoted by \textit{MNIST}. 
Note that all features are rescaled to $[0, 1]$. 
The details of these datasets are shown in Table \ref{table_datasets}.

To evaluate the performance of AdaGAE, 11 methods serve as competitors. 
To ensure the fairness, 4 clustering methods without neural networks are used, including \textit{K-Means}, \textit{CAN} \cite{CAN}, Ratio Cut (\textit{RCut}) \cite{RatioCut}, and Normalized Cut (\textit{NCut}) \cite{SC}. 
Three deep clustering methods for general data, \textit{DEC} \cite{DEC} \textit{DFKM} \cite{DFKM}, and \textit{SpectralNet} \cite{SpectralNet}, also serve as an important baseline. Besides, four GAE-based methods are used, including \textit{GAE} \cite{GAE}, \textit{MGAE} \cite{MGAE}, \textit{GALA} \cite{GALA}, and \textit{SDCN} \cite{SDCN}. All codes are downloaded from the homepages of authors.

\subsection{Experimental Setup}
In our experiments, the encoder consists of two GCN layers. If the input dimension is 1024, the first layer has 256 neurons and the second layer has 64 neurons. Otherwise, the two layers have 128 neurons and 64 neurons respectively. The activation function of the first layer is set as ReLU while the other one employs the linear function. 
The initial sparsity $k_0$ is set as 5 and the upper bound $k_m$ is searched from $\{\lfloor \frac{n}{c}\rfloor, \lfloor \frac{n}{2c}\rfloor\}$. The tradeoff coefficient $\lambda$ is searched from $\{10^{-3}, 10^{-2}, \cdots, 10^3\}$. The number of graph update step is set as 10 and the maximum iterations to optimize GAE varies in $[150, 200]$. 
To use Ratio Cut and Normalized Cut, we construct the graph via Gaussian kernel, which is given as 
\begin{equation}
    s_{ij} = \frac{\exp(-\frac{\|\bm x_i - \bm x_j\|_2^2}{\sigma})}{\sum _{j \in \mathcal N_i}\exp(-\frac{\| \bm x_i - \bm x_j\|_2^2}{\sigma})},
\end{equation}
where $\mathcal N_i$ represents $m$-nearest neighbors of sample $\bm x_i$. 
$m$ is searched from $\{5, 10\}$ and 
$\sigma$ is searched from $\{10^{-3}, 10^{-2}, \cdots, 10^3\}$. 
The maximum iterations of GAE with fixed $\widetilde A$ is set as 200.

\begin{figure}[t]
    \centering
    \subfigure[UMIST]{\includegraphics[width=0.48\linewidth]{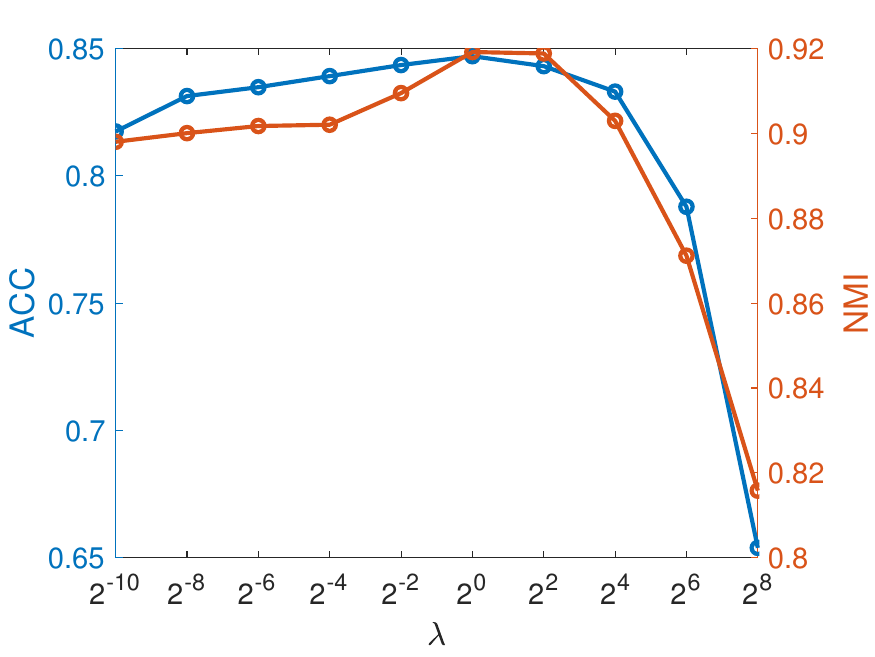}}
    \subfigure[USPS]{\includegraphics[width=0.48\linewidth]{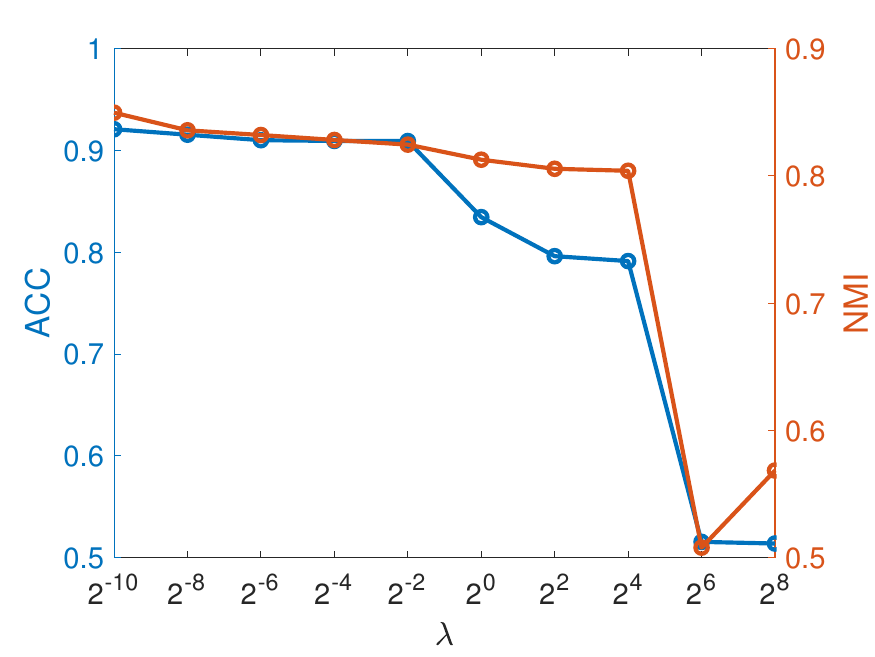}}

    \caption{Parameter sensitivity of $\lambda$ on UMIST and USPS. 
    On UMIST, the second term improves the performance distinctly. Besides, if $\lambda$ is not too large, AdaGAE will obtain good results. }
    \label{figure_sensitivity}
\end{figure}

To study the roles of different parts, the ablation experiments are conducted: GAE with $\lambda=0$ and fixed $\widetilde A$  (denoted by $GAE^\dag$), AdaGAE with fixed $\widetilde A$ (denoted by \textit{Method-A}), and AdaGAE with fixed sparsity $k$ (denoted by \textit{Method-B}). 

Two popular clustering metrics, the clustering accuracy (\textit{ACC}) and normalized mutual information (\textit{NMI}), are employed to evaluate the performance. All methods are run 10 times and the means are reported. 
The code of AdaGAE is implemented under pytorch-1.3.1 on a PC with an NVIDIA GeForce GTX 1660 GPU. 
The exact settings are shown in Table \ref{table_details}. 

\begin{table*}[t]
    \centering
    \setlength{\abovecaptionskip}{0cm}
    \setlength{\belowcaptionskip}{-10mm}
    \renewcommand\arraystretch{1.1}
    \setlength{\tabcolsep}{2mm}
    \caption{NMI (\%)}
    \label{table_nmi}
    \begin{tabular}{l c c c c c c c c c c }
    \hline
    
    \hline
        Methods & Text & 20news & Isolet & Segment & PALM & UMIST & JAFFE & COIL20 & USPS & MNIST \\
    \hline
    \hline
    
        K-Means & \underline{51.09} & 0.27 & 74.15 & 55.40 & 89.98 & 65.47 & 80.90 & 74.58 & 62.88 & 54.17 \\
        CAN \cite{CAN} & 2.09 & 3.41 & 78.17 & 52.62 & \underline{97.08} & \underline{87.75} & \underline{96.39} & \underline{90.93} & \underline{78.85} & 77.30 \\
        RCut \cite{RatioCut} & 0.35 & 4.59 & \underline{78.55} & 53.22 & 85.78 & 77.64 & 90.63 & 84.16 & 70.35 & 70.29 \\
        NCut \cite{SC} & 0.93 & 4.57 & 72.11 & 51.11 & 85.21 & 77.54 & 89.56 & 84.70 & 70.43 & 72.15 \\
        DEC \cite{DEC} & 2.09 & 0.27 & 70.13 & 0.00 & 55.22 & 56.96 & 82.83 & 90.37 & 48.71 & \underline{80.25} \\
        DFKM \cite{DFKM} & 0.25 & 3.39 & 69.81 & 49.91 & 86.74 & 67.04 & 92.01 & 76.81 & 71.58 & 38.75\\
        GAE \cite{GAE} & 35.63 & 4.49 & 74.63 & 50.22 & 94.87 & 80.24 & 93.26 & 86.45 & 76.02 & 65.58 \\
        MGAE \cite{MGAE} & 1.40 & \underline{23.41} & 65.06 & 39.26 & 81.13 & 68.00 & 89.65 & 73.59 & 62.18 & 57.33 \\
        GALA \cite{GALA} & 2.09 & 3.09 & 71.60 & \underline{57.09} & 89.50 & 63.71 & 92.54 & 87.71 & 71.50 & 75.65 \\
        SDCN \cite{SDCN} & 0.94 & 1.71 & 73.05 & 45.89 & 60.18 & 38.65 & 46.85 & 60.07 & 38.97 & 69.14 \\ 
        SpectralNet \cite{SpectralNet} & 3.77 & 3.45 & 78.60 & 53.64 & 92.87 & 79.99 & 87.02 & 88.00 & 74.84 & \underline{81.66} \\
    \hline
        GAE$^\dag$ & 2.09 & 10.30 & 78.61 & 53.86 & 93.86 & 87.00 & 96.39 & 95.62 & 78.46 & 76.15 \\
        Method-A & 0.52 & 25.18 & \textbf{79.76} & 52.16 & 96.96 & 87.04 & 96.39 & 97.26 & 76.45 & 76.59\\
        Method-B & 0.14 & 6.38 & 77.70 & 30.02 & 97.80 & 52.08 & 59.55 & 55.46 & 35.39 & 1.63 \\
        AdaGAE & \textbf{51.59} & \textbf{49.60} & 78.89 & \textbf{59.76} & \textbf{98.18} & \textbf{91.03} & \textbf{96.78} & \textbf{98.36} & \textbf{84.81} & \textbf{85.31} \\
    \hline
    
    \hline
    \end{tabular}
    
    \end{table*}

\begin{table}[t]
    \centering
    \renewcommand\arraystretch{1.1}
    \caption{$k_0$: initial sparsity; $\gamma$: learning rate; $t_i$: number of iterations to update GAE; struct: neurons of each layer used in AdaGAE.}
    \label{table_details}
    \begin{tabular}{l c c c c c c c}
    \hline
    
    \hline
        Dataset & $\lambda$ & $k_0$ & $\gamma$  & $t_i$ & $k_m$ & $T$ & struct \\
    \hline
    \hline
        Text & 0.01 & 30 & $5\times10^{-3}$ & 150 & $\lfloor \frac{n}{c} \rfloor$ & 10 & $d$-256-64 \\
        20news & 0.1 & 20 & $10^{-3}$ & 200 & $\lfloor \frac{n}{2c} \rfloor$ & 10 & $d$-256-64 \\
        Isolet & 0.1 & 20 & $10^{-3}$ & 200 & $\lfloor \frac{n}{c} \rfloor$ & 5 & $d$-256-64 \\
        PALM & $10$ & 10 & $10^{-3}$ &  50 & $\lfloor \frac{n}{c} \rfloor$ & 10 & $d$-256-64\\
        Segment & 100 & 20 & $10^{0}$ & 100 & (30) & 10 & $d$-10-7 \\
        UMIST & 1 & 5 & $10^{-3}$  & 50 & $\lfloor \frac{n}{c} \rfloor$ & 10 & $d$-256-64\\
        COIL20 & 1 & 5 & $10^{-2}$  & 100 & $\lfloor \frac{n}{2c} \rfloor$ & 10 & $d$-256-64\\
        JAFFE & $10^{-3}$ & 5 & $10^{-2}$  & 20 & $\lfloor \frac{n}{c} \rfloor$ & 10 & $d$-256-64\\
        USPS & $10^{-2}$ & 5 & $5 \times 10^{-3}$  & 150 & $\lfloor \frac{n}{c} \rfloor$ & 10 & $d$-128-64\\
        MNIST & $10^{-2}$ & 5 & $10^{-3}$ & 200 & $\lfloor \frac{n}{2c} \rfloor$ & 10 & $d$-256-64\\
    \hline
    \end{tabular}
\end{table}

\subsection{Experimental Results}
To illustrate the process of AdaGAE, Figure \ref{figure_epoch} shows the learned embedding on USPS at the $i$-th epoch. An epoch means a complete training of GAE and an update of the graph. The maximum number of epochs, $T$, is set as 10. In other words, the graph is updated 10 times. Clearly, the embedding becomes more cohesive with the update.

ACCs and NMIs of all methods are reported in Table \ref{table_acc} and \ref{table_nmi}. The best results of both competitors and AdaGAEs are highlighted in boldface while the suboptimal results are underlined. From Table \ref{table_acc} and \ref{table_nmi}, we conclude that: 





\begin{itemize}
    \item On text datasets (Text and 20news), most graph-based methods get a trivial result, as they group all samples into the same cluster such that NMIs approximate 0. Only $k$-means, MGAE, and AdaGAE obtain the non-trivial assignments. 
    \item Classical clustering models work poorly on large scale datasets. Instead, DEC and SpectralNet work better on the large scale datasets. Although GAE-based models (GAE, MGAE, and GALA) achieve impressive results on graph type datasets, they fail on the general datasets, which is probably caused by the fact that the graph is constructed by an algorithm rather than prior information. If the graph is not updated, the contained information is low-level. The adaptive learning will induce the model to exploit the high-level information. In particular, AdaGAE is stable on all datasets. 
    \item When the sparsity $k$ keeps fixed, AdaGAE collapses on most of datasets. For example, ACC shrinks about 50\% and NMI shrinks about 40\% on COIL20.
    \item From the comparison of 3 extra experiments, we confirm that the adaptive graph update plays a positive role. Besides, the novel architecture with weighted graph improves the performance on most of datasets. 
\end{itemize}
To study the impact of different parts of the loss in Eq. (\ref{obj}), the performance with different $\lambda$ is reported in Figure \ref{figure_sensitivity}.
From it, we find that the second term (corresponding to problem (\ref{obj_CAN})) plays an important role especially on UMIST. If $\lambda$ is set as a large value, we may get the trivial embedding according to the constructed graph. AdaGAE will obtain good results when $\lambda$ is not too large.

Besides, Figure \ref{figure_visualization} illustrates the learned embedding vividly. 
Combining with Theorem \ref{theo_degeneration}, if $k$ is fixed, 
then $\widetilde A$ degenerates into an unweighted adjacency matrix 
and a cluster is broken into a mass of groups. 
Each group only contains a small number of data points 
and they scatter chaotically, which leads to collapse. 
Instead, \textit{the adaptive process connects these groups 
before degeneration via increasing sparsity $k$, 
and hence, the embeddings in a cluster become cohesive}. 
Note that \textbf{the cohesive representations are preferable (not over-fitting) for clustering due to the unsupervised setting}.
It should be emphasized that a large $k_0$ frequently leads to capture the wrong information. 
After the transformation of GAE, the nearest neighbors are more likely to belong with the same cluster 
and thus it is rational to increasing $k$ with an adequate step size.

\section{Conclusion}
In this paper, we propose a novel clustering model for general data clustering, 
namely Adaptive Graph Auto-Encoder (\textit{AdaGAE}). 
A generative graph representation model is utilized to construct a weighted 
graph for GAE. 
To exploit potential information, we employ a graph auto-encoder to exploit
the high-level information. 
As the graph used in GAE is constructed artificially, 
an adaptive update step is developed to update the graph with the help of 
learned embedding. 
Related theoretical analyses demonstrate the reason why AdaGAE with fixed 
sparsity collapses in the update step such that a dynamic sparsity is essential. 
In experiments, we show the significant performance of AdaGAE and verify 
the effectiveness of the adaptive update step via the ablation experiments. 
Surprisingly, the visualization supports the theoretical analysis well and 
helps to understand how AdaGAE works. 


\section{Proofs} \label{section_proof}

\subsection{Proof of Theorem \ref{theo_degeneration}}
\begin{proof}
    Without loss of generality, we focus on the connectivity distribution of node $v$ 
    and suppose that 
    $p(v_1 | v) \geq p(v_2 | v) \geq \cdots \geq p(u_k | v) > 0 = p(u_{k+1} | v) = \cdots = p(u_m | v)$. 
    Let $p_i = p(v_i | v)$, $q_i = q(v_i | v)$, $d_i = \|\bm z_i - \bm z\|$, then $q_i$ can be formulated as 
    \begin{equation}
        \begin{aligned}
            q_i = \frac{\exp(- d_i)}{\sum_{j=1}^m \exp(- d_j)} .
        \end{aligned}
    \end{equation}
    If for any $i$, $|p_i - q_i| \leq \varepsilon$always holds. 
    Apparently, $p_{k+1} = 0$. 
    Assume that $q_{k+1} = \tau \leq \varepsilon$ and we have 
    \begin{equation}
        \begin{split}
            & \frac{\exp(- d_{k+1})}{\sum_{j=1}^m \exp(- d_j)} = \tau 
            \Leftrightarrow ~  d_{k+1} = \log \frac{1}{\tau} - \log C .
        \end{split}
    \end{equation}
    where $C = \sum_{j=1}^m \exp(- d_j)$. 
    With the condition $p_k \geq \mu$, we have 
    \begin{equation}
        \begin{aligned}
            & \frac{\exp(- d_{k})}{\sum_{j=1}^m \exp(- d_j)} \geq p_k - \varepsilon \geq \mu - \varepsilon \\
            \Rightarrow ~ &  d_k \leq -\log C - \log (\mu - \varepsilon) . \\
        \end{aligned}
    \end{equation}
    Similarly, due to $p_1 \geq \frac{1}{k}$, 
    \begin{equation}
        \begin{split}
            & \frac{\exp(- d_{1})}{\sum_{j=1}^m \exp(- d_j)} \leq 1 
            \Rightarrow ~  d_1 \geq - \log C .\\
        \end{split}
    \end{equation}
    If update the connectivity distribution according to $\{ d_{i}\}_{i = 1}^m$, then for any $i \leq k$, 
    \begin{equation}
        \hat p_i = \frac{ d_{k+1} -  d_i}{\sum _{j=1}^k ( d_{k+1} -  d_j)} .
    \end{equation}
    We further have 
    \begin{align*}
        | \hat p_i -  \hat p_j| & = \frac{| d_j -  d_i|}{\sum _{j=1}^k ( d_{k+1} -  d_j)} \\
        & \leq \frac{| d_k -  d_1|}{\sum _{j=1}^k ( d_{k+1} -  d_j)} 
        = \frac{-\log (\mu - \varepsilon)}{\sum _{j=1}^k (\log \frac{1}{\tau} - \log C -  d_j)} \\
        & \leq \frac{-\log (\mu - \varepsilon)}{\sum _{j=1}^k (\log \frac{1}{\tau} - \log C -  d_k)} \\
        & \leq \frac{-\log (\mu - \varepsilon)}{k (\log(\mu - \varepsilon) - \log \tau)} \\
        & = \frac{1}{k} \cdot \frac{\log (\mu - \varepsilon)}{ \log \tau - \log(\mu - \varepsilon)} 
        = \frac{1}{k} \cdot \frac{1}{\frac{\log (1/\tau)}{\log (1/(\mu - \varepsilon))} - 1} \\
        & \leq \frac{1}{k} \cdot \frac{1}{\frac{\log (1 / \varepsilon)}{\log 1 / (\mu - \varepsilon)} - 1} 
        = \mathcal{O}(\frac{\log (1 / \mu)}{\log (1 / \varepsilon)}) . 
    \end{align*}
    Accordingly, for any $i \leq k$, we have the following results
    \begin{align*}
        |\hat p_i - \frac{1}{k}| & \leq |\hat p_i - \frac{1}{k} \sum _{j=1}^k \hat p_j| 
        \leq \frac{1}{k} |\sum _{j=1}^k (\hat p_i - \hat p_j)| \\
        & \leq \frac{1}{k} \sum _{j=1}^k | (\hat p_i - \hat p_j) | 
        \leq \mathcal{O}(\frac{\log (1 / \mu)}{\log (1 / \varepsilon)}) . 
    \end{align*}
    The proof is easy to extend to the other nodes. Hence, the theorem is proved. 
\end{proof}

\subsection{Proof of Theorem \ref{theo_entropy_equal}}

\begin{proof}
    According to the definition of KL-divergence, 
    \begin{equation}
        KL(q(\cdot | v_i) \| p(\cdot | v_i)) = \sum _{j=1}^n q(v_j | v_i) \log (\frac{q(v_j | v_i)}{p(v_j | v_i)}) ,
    \end{equation}
    the problem is equivalent to the following $i$-th subproblem  
    \begin{equation}
        \begin{split}
            \min \limits_{q_{ij}} & \sum \limits_{j = 1}^n q_{ij} \hat d_{ij} + q_{ij} \log q_{ij} , 
            ~ s.t. \sum \limits_{j=1}^n q_{ij} = 1, q_{ij} > 0 .
        \end{split} 
    \end{equation}
    Similarly, the subscript $i$ is omitted to keep notations uncluttered. The Lagrangian is 
    \begin{equation}
        \mathcal L = \sum \limits_{j = 1}^n q_{j} \hat d_{j} + q_{j} \log q_{j} + \alpha (1 - \sum \limits_{j=1}^n q_j) + \sum \limits_{j=1} \beta_j (-q_j) .
    \end{equation}
    Then the KKT conditions are 
    \begin{equation}
            \hat d_j + 1 + \log q_j - \alpha - \beta_j = 0,
            \sum \limits_{j=1}^n q_j = 1, 
            \beta_j q_j = 0, 
            \beta_j \geq 0 .
    \end{equation}
    Due to $q_j > 0$, $\beta_j = 0$. Use the first one, we have 
    \begin{equation}
        q_j = \exp(\alpha - \hat d_j - 1) .
    \end{equation}
    Combine it with the second line and we have 
    \begin{equation}
        \exp(\alpha) \sum _{j=1}^n \exp(-\hat d_j - 1) = 1 .
    \end{equation}
    Furthermore, we have 
    \begin{equation}
        q_j = \frac{\exp(-\hat d_j - 1)}{\sum _{j=1}^n \exp(-\hat d_j - 1)} = \frac{\exp(-\hat d_j)}{\sum _{j=1}^n \exp(-\hat d_j)} .
    \end{equation}
    With $\hat d_{ij} = \|z_i - z_j\|_2$, the theorem is proved.
\end{proof}

\subsection{Proof of Theorem \ref{theo_spectrum}}
It should be pointed out that the proof imitates the corresponding proof in \cite{SimplifyingGCN}. Analogous to Lemma 3 in \cite{SimplifyingGCN}, we first give the following lemma without proof.
\begin{myLemma}
    Let $\alpha_1 \leq \alpha_2 \leq \cdots \leq \alpha_n$ be eigenvalues of $\hat D^{-\frac{1}{2}} \hat A' \hat D^{-\frac{1}{2}}$ and $\beta_1 \leq \beta_2 \leq \cdots \leq \beta_n$ be eigenvalues of $\hat D'^{-\frac{1}{2}} \hat A' \hat D'^{-\frac{1}{2}}$. The following inequality always holds
    \begin{equation}
        \alpha_1 \geq \frac{\beta_1}{1 + \min \frac{\hat A_{ii}}{\hat D'_{ii}}},
        \alpha_n \leq \frac{1}{1 + \max \frac{\hat A_{ii}}{\hat D'_{ii}}} .
    \end{equation}
\end{myLemma}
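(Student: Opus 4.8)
The plan is to exhibit the two matrices as congruent transforms of a single symmetric matrix and then compare their extreme eigenvalues through generalized Rayleigh quotients. Write $P = \hat D'^{-1/2}\hat A'\hat D'^{-1/2}$ (eigenvalues $\beta_1 \le \cdots \le \beta_n$) and $Q = \hat D^{-1/2}\hat A'\hat D^{-1/2}$ (eigenvalues $\alpha_1 \le \cdots \le \alpha_n$). Since $\hat A' = \hat D'^{1/2}P\hat D'^{1/2}$, substituting yields $Q = BPB$ with the positive diagonal matrix $B = \hat D'^{1/2}\hat D^{-1/2}$, whose entries are $B_{ii} = \sqrt{\hat D'_{ii}/\hat D_{ii}} = (1+r_i)^{-1/2}$, where $r_i = \hat A_{ii}/\hat D'_{ii} \ge 0$ records the self-loop weight relative to the self-loop-free degree and I use $\hat D_{ii} = \hat D'_{ii} + \hat A_{ii}$.

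First I would express the extreme eigenvalues of $Q=BPB$ as Rayleigh quotients and remove $B$ by the change of variable $u = Bx$:
\[
\alpha_1 = \min_{u\ne 0}\frac{u^TPu}{\sum_i(1+r_i)u_i^2},\qquad \alpha_n = \max_{u\ne 0}\frac{u^TPu}{\sum_i(1+r_i)u_i^2}.
\]
The denominator obeys $(1+\min_i r_i)\|u\|_2^2 \le \sum_i(1+r_i)u_i^2 \le (1+\max_i r_i)\|u\|_2^2$, while the numerator obeys $\beta_1\|u\|_2^2 \le u^TPu \le \beta_n\|u\|_2^2$. A key free fact is $\beta_1 \le 0$: because self-loops have been removed, $\hat A'$ has zero diagonal, so $\mathrm{tr}(P) = \sum_i\beta_i = 0$ forces $\beta_1 \le 0$ (strictly negative as soon as the graph has an edge).

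For the first inequality I would show pointwise that $u^TPu/\sum_i(1+r_i)u_i^2 \ge \beta_1/(1+\min_i r_i)$ for every $u$, splitting on the sign of the numerator. When $u^TPu \ge 0$ the quotient is nonnegative, and the claim reduces to $\beta_1 \le 0$; when $u^TPu < 0$, using $\sum_i(1+r_i)u_i^2 \ge (1+\min_i r_i)\|u\|_2^2$ gives $u^TPu/\sum_i(1+r_i)u_i^2 \ge u^TPu/\big((1+\min_i r_i)\|u\|_2^2\big) \ge \beta_1/(1+\min_i r_i)$. Taking the minimum over $u$ delivers $\alpha_1 \ge \beta_1/(1+\min_i \hat A_{ii}/\hat D'_{ii})$.

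For the companion upper bound I would use that $P$ is the normalized adjacency of a self-loop-free graph, so its spectral radius is $\beta_n = 1$ (eigenvector $\hat D'^{1/2}\textbf{1}$); hence $u^TPu \le \|u\|_2^2$, and bounding the denominator below by $(1+\min_i r_i)\|u\|_2^2$ controls $\alpha_n$ from above. The delicate points, and the main obstacle, are twofold: the sign case-analysis in the minimum must be tracked carefully since $\beta_1 < 0$ reverses inequality directions under division; and matching the exact extremal ratio is subtle, because this congruence estimate naturally produces the constant $1/(1+\min_i \hat A_{ii}/\hat D'_{ii})$, so recovering the displayed bound with $\max_i$ would require a sharper argument exploiting the alignment of the top eigenvector of $Q$ with the coordinate attaining $\max_i r_i$. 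I would therefore verify which extremal ratio is genuinely attainable before committing to the constant in the second inequality.
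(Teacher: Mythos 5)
Your approach — writing both matrices as congruence transforms of $P=\hat D'^{-1/2}\hat A'\hat D'^{-1/2}$ and comparing Rayleigh quotients — is the right one, and your proof of the \emph{first} inequality is complete and correct. Note that the paper itself states this lemma \emph{without} proof (it is imported from Lemma 3 of the SGC paper it cites), so the only meaningful comparison is with that standard argument; your variant (absorbing the diagonal factor into denominator weights $1+r_i$ and comparing against $\|u\|_2^2$) is equivalent to SGC's comparison against the degree-weighted norm and yields the same constants. The auxiliary facts you invoke — $\beta_1\le 0$ from $\operatorname{tr}(P)=0$, $\beta_n=1$ with eigenvector $\hat D'^{1/2}\mathbf{1}$, and the sign case-split when dividing through by the denominator bounds — are all sound.

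Your hesitation about the second inequality is not a gap in your argument: the inequality as printed, with $\max_i \hat A_{ii}/\hat D'_{ii}$, is simply false, and no sharper argument can recover it. Counterexample: take two nodes joined by an edge of weight $1$, with self-loops $\hat A_{11}=0$ and $\hat A_{22}=3$. Then $\hat A'=\bigl(\begin{smallmatrix}0&1\\1&0\end{smallmatrix}\bigr)$, $\hat D'=I$, $\hat D=\operatorname{diag}(1,4)$, so $\alpha_n=\tfrac{1}{2}$, while the claimed bound is $1/(1+\max_i r_i)=\tfrac{1}{4}$. The correct statement is exactly the one your congruence estimate produces, $\alpha_n \le \beta_n/(1+\min_i \hat A_{ii}/\hat D'_{ii})$. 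The error is a transcription slip from SGC: there the self-loop weight $\gamma$ is uniform and the bound reads $1+\gamma/d_{\max}$, where the maximum is over \emph{degrees}; the maximum degree corresponds to the \emph{minimum} ratio $\gamma/d_i$, so when the self-loop weights become node-dependent (the whole point of AdaGAE's adaptive self-loops), ``max degree'' must translate to ``min ratio,'' not ``max ratio.'' Nothing downstream breaks: the paper's proof of Theorem 4 uses only the first inequality together with $\beta_1\le 0$, and your $\min$-version of the second inequality is the correct replacement wherever it would be needed.
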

The proof of Theorem \ref{theo_spectrum} is apparent according to Lemma 3 provided in \cite{SimplifyingGCN}. 

\begin{proof} 
    Let $M = diag(\hat A)$ and we have
    \begin{equation} \notag
        \begin{split}
            \lambda_n & = \max \limits_{\|\bm x\|=1} \bm x^T (I - \hat D^{-\frac{1}{2}} M \hat D^{-\frac{1}{2}} - \hat D^{-\frac{1}{2}} \hat A' \hat D^{-\frac{1}{2}}) \bm x \\
            & \leq 1 - \min \frac{\hat A_{ii}}{\hat D'_{ii} + \hat A_{ii}} - \alpha_1 \\
            & \leq 1 - \min \frac{\hat A_{ii}}{\hat D'_{ii} + \hat A_{ii}} - \frac{\beta_1}{1 + \min \frac{\hat A_{ii}}{\hat D'_{ii}}} \\
            & \leq 1 - \frac{\beta_1}{1 + \min \frac{\hat A_{ii}}{\hat D'_{ii}}} 
            \leq 1 - \beta = \lambda_n' . \qedhere
        \end{split}
    \end{equation}
\end{proof}

{\small
\bibliographystyle{IEEEtran}
\bibliography{../citations}

\begin{thebibliography}{10}
\providecommand{\url}[1]{#1}
\csname url@samestyle\endcsname
\providecommand{\newblock}{\relax}
\providecommand{\bibinfo}[2]{#2}
\providecommand{\BIBentrySTDinterwordspacing}{\spaceskip=0pt\relax}
\providecommand{\BIBentryALTinterwordstretchfactor}{4}
\providecommand{\BIBentryALTinterwordspacing}{\spaceskip=\fontdimen2\font plus
\BIBentryALTinterwordstretchfactor\fontdimen3\font minus \fontdimen4\font\relax}
\providecommand{\BIBforeignlanguage}[2]{{%
\expandafter\ifx\csname l@#1\endcsname\relax
\typeout{** WARNING: IEEEtran.bst: No hyphenation pattern has been}%
\typeout{** loaded for the language `#1'. Using the pattern for}%
\typeout{** the default language instead.}%
\else
\language=\csname l@#1\endcsname
\fi
#2}}
\providecommand{\BIBdecl}{\relax}
\BIBdecl

\bibitem{clustering_base}
J.~C. Dunn, ``A fuzzy relative of the isodata process and its use in detecting compact well-separated clusters,'' \emph{Journal of Cybernetics}, vol.~3, no.~3, pp. 32--57, 1973.

\bibitem{FCM}
J.~Bezdek, ``A convergence theorem for the fuzzy isodata clustering algorithms,'' \emph{IEEE transactions on Pattern Analysis and Machine Intelligence}, no.~1, pp. 1--8, 1980.

\bibitem{MJP}
R.~Zhang, H.~Zhang, and X.~Li, ``Maximum joint probability with multiple representations for clustering,'' \emph{IEEE Transactions on Neural Networks and Learning Systems}, pp. 1--11, 2021.

\bibitem{SC}
A.~Y. Ng, M.~I. Jordan, and Y.~Weiss, ``On spectral clustering: Analysis and an algorithm,'' in \emph{Advances in neural information processing systems}, 2002, pp. 849--856.

\bibitem{RatioCut}
L.~Hagen and A.~B. Kahng, ``New spectral methods for ratio cut partitioning and clustering,'' \emph{IEEE transactions on computer-aided design of integrated circuits and systems}, vol.~11, no.~9, pp. 1074--1085, 1992.

\bibitem{CAN}
F.~Nie, X.~Wang, and H.~Huang, ``Clustering and projected clustering with adaptive neighbors,'' in \emph{Proceedings of the 20th ACM SIGKDD international conference on Knowledge discovery and data mining}.\hskip 1em plus 0.5em minus 0.4em\relax ACM, 2014, pp. 977--986.

\bibitem{SpectralNet}
U.~Shaham, K.~Stanton, H.~Li, B.~Nadler, R.~Basri, and Y.~Kluger, ``Spectralnet: Spectral clustering using deep neural networks,'' \emph{arXiv preprint arXiv:1801.01587}, 2018.

\bibitem{DEC}
J.~Xie, R.~Girshick, and A.~Farhadi, ``Unsupervised deep embedding for clustering analysis,'' in \emph{International conference on machine learning}, 2016, pp. 478--487.

\bibitem{DFKM}
R.~Zhang, X.~Li, H.~Zhang, and F.~Nie, ``Deep fuzzy k-means with adaptive loss and entropy regularization,'' \emph{IEEE Transactions on Fuzzy Systems}, vol.~28, no.~11, pp. 2814--2824, 2020.

\bibitem{JULE}
J.~Yang, D.~Parikh, and D.~Batra, ``Joint unsupervised learning of deep representations and image clusters,'' in \emph{Proceedings of the IEEE Conference on Computer Vision and Pattern Recognition}, 2016, pp. 5147--5156.

\bibitem{DEPICT}
K.~Ghasedi~Dizaji, A.~Herandi, C.~Deng, W.~Cai, and H.~Huang, ``Deep clustering via joint convolutional autoencoder embedding and relative entropy minimization,'' in \emph{Proceedings of the IEEE international conference on computer vision}, 2017, pp. 5736--5745.

\bibitem{DualDeepClustering}
X.~Yang, C.~Deng, F.~Zheng, J.~Yan, and W.~Liu, ``Deep spectral clustering using dual autoencoder network,'' in \emph{Proceedings of the IEEE Conference on Computer Vision and Pattern Recognition}, 2019, pp. 4066--4075.

\bibitem{GraphGAN}
H.~Wang, J.~Wang, J.~Wang, M.~Zhao, W.~Zhang, F.~Zhang, X.~Xie, and M.~Guo, ``Graphgan: Graph representation learning with generative adversarial nets,'' in \emph{Thirty-Second AAAI Conference on Artificial Intelligence}, 2018, pp. 2508--2515.

\bibitem{DeepWalk}
B.~Perozzi, R.~Al-Rfou, and S.~Skiena, ``Deepwalk: Online learning of social representations,'' in \emph{Proceedings of the 20th ACM SIGKDD international conference on Knowledge discovery and data mining}.\hskip 1em plus 0.5em minus 0.4em\relax ACM, 2014, pp. 701--710.

\bibitem{DNGR}
S.~Cao, W.~Lu, and Q.~Xu, ``Deep neural networks for learning graph representations,'' in \emph{Thirtieth AAAI conference on artificial intelligence}, 2016.

\bibitem{SDNE}
D.~Wang, P.~Cui, and W.~Zhu, ``Structural deep network embedding,'' in \emph{Proceedings of the 22nd ACM SIGKDD international conference on Knowledge discovery and data mining}, 2016, pp. 1225--1234.

\bibitem{GNN}
F.~Scarselli, M.~Gori, A.~C. Tsoi, M.~Hagenbuchner, and G.~Monfardini, ``The graph neural network model,'' \emph{IEEE Transactions on Neural Networks}, vol.~20, no.~1, pp. 61--80, 2008.

\bibitem{GCN}
T.~N. Kipf and M.~Welling, ``Semi-supervised classification with graph convolutional networks,'' in \emph{ICLR}, 2017.

\bibitem{ChebNet}
M.~Defferrard, X.~Bresson, and P.~Vandergheynst, ``Convolutional neural networks on graphs with fast localized spectral filtering,'' in \emph{Advances in neural information processing systems}, 2016, pp. 3844--3852.

\bibitem{GAE}
T.~N. Kipf and M.~Welling, ``Variational graph auto-encoders,'' \emph{arXiv preprint arXiv:1611.07308}, 2016.

\bibitem{MGAE}
C.~Wang, S.~Pan, G.~Long, X.~Zhu, and J.~Jiang, ``Mgae: Marginalized graph autoencoder for graph clustering,'' in \emph{Proceedings of the 2017 ACM on Conference on Information and Knowledge Management}, 2017, pp. 889--898.

\bibitem{AttentionGAE}
C.~Wang, S.~Pan, R.~Hu, G.~Long, J.~Jiang, and C.~Zhang, ``Attributed graph clustering: a deep attentional embedding approach,'' in \emph{Proceedings of the 28th International Joint Conference on Artificial Intelligence}.\hskip 1em plus 0.5em minus 0.4em\relax AAAI Press, 2019, pp. 3670--3676.

\bibitem{AE}
G.~Hinton and R.~Salakhutdinov, ``Reducing the dimensionality of data with neural networks,'' \emph{Science}, vol. 313, no. 5786, pp. 504--507, 2006.

\bibitem{patchysan}
M.~Niepert, M.~Ahmed, and K.~Kutzkov, ``Learning convolutional neural networks for graphs,'' in \emph{International conference on machine learning}, 2016, pp. 2014--2023.

\bibitem{spectralgcn}
J.~Bruna, W.~Zaremba, A.~Szlam, and Y.~LeCun, ``Spectral networks and locally connected networks on graphs,'' \emph{arXiv preprint arXiv:1312.6203}, 2013.

\bibitem{MixHopGCN}
S.~Abu-El-Haija, B.~Perozzi, A.~Kapoor, N.~Alipourfard, K.~Lerman, H.~Harutyunyan, G.~Ver~Steeg, and A.~Galstyan, ``Mixhop: Higher-order graph convolutional architectures via sparsified neighborhood mixing,'' in \emph{International Conference on Machine Learning}, 2019, pp. 21--29.

\bibitem{SimplifyingGCN}
F.~Wu, T.~Zhang, A.~Holanda~de Souza, C.~Fifty, T.~Yu, and K.~Q. Weinberger, ``Simplifying graph convolutional networks,'' \emph{Proceedings of Machine Learning Research}, 2019.

\bibitem{powerfulGCN}
K.~Xu, W.~Hu, J.~Leskovec, and S.~Jegelka, ``How powerful are graph neural networks?'' in \emph{Proc. of ICLR}, 2019.

\bibitem{AGAE}
S.~Pan, R.~Hu, G.~Long, J.~Jiang, L.~Yao, and C.~Zhang, ``Adversarially regularized graph autoencoder for graph embedding.'' in \emph{IJCAI}, 2018, pp. 2609--2615.

\bibitem{EGAE}
H.~Zhang, R.~Zhang, and X.~Li, ``Embedding graph auto-encoder for graph clustering,'' \emph{arXiv preprint arXiv:2002.08643}, 2020.

\bibitem{SDCN}
D.~Bo, X.~Wang, C.~Shi, M.~Zhu, E.~Lu, and P.~Cui, ``Structural deep clustering network,'' in \emph{{WWW}}.\hskip 1em plus 0.5em minus 0.4em\relax {ACM} / {IW3C2}, 2020, pp. 1400--1410.

\bibitem{GALA}
J.~Park, M.~Lee, H.~J. Chang, K.~Lee, and J.~Y. Choi, ``Symmetric graph convolutional autoencoder for unsupervised graph representation learning,'' in \emph{Proceedings of the IEEE International Conference on Computer Vision}, 2019, pp. 6519--6528.

\bibitem{UCI}
\BIBentryALTinterwordspacing
D.~Dua and C.~Graff, ``{UCI} machine learning repository,'' 2017. [Online]. Available: \url{http://archive.ics.uci.edu/ml}
\BIBentrySTDinterwordspacing

\bibitem{UMIST}
C.~Hou, F.~Nie, X.~Li, D.~Yi, and Y.~Wu, ``Joint embedding learning and sparse regression: A framework for unsupervised feature selection,'' \emph{IEEE Transactions on Cybernetics}, vol.~44, no.~6, pp. 793--804, 2014.

\bibitem{JAFFE}
M.~Lyons, J.~Budynek, and S.~Akamatsu, ``Automatic classification of single facial images,'' \emph{IEEE Transactions on Pattern Analysis and Machine Intelligence}, vol.~21, no.~12, pp. 1357--1362, 1999.

\bibitem{COIL20}
S.~Nene, S.~Nayar, and H.~Murase, ``Columbia object image library (coil-20),'' \emph{Technical Report CUCS-005-96}, 1996.

\bibitem{USPS}
J.~Hull, ``A database for handwritten text recognition research,'' \emph{IEEE Transactions on Pattern Analysis and Machine Intelligence}, vol.~16, no.~5, pp. 550--554, 1994.

\bibitem{MNIST}
\BIBentryALTinterwordspacing
Y.~LeCun, ``The mnist database of handwritten digits,'' 1998. [Online]. Available: \url{http://yann.lecun.com/exdb/mnist/}
\BIBentrySTDinterwordspacing

\end{thebibliography}
}

\end{document}